\def\notes{1}
	\definecolor{mygray}{gray}{0.4}
	\definecolor{DarkGreen}{rgb}{0.2,0.6,0.2}
	\definecolor{DarkRed}{rgb}{0.6,0.2,0.2}
	\definecolor{DarkBlue}{rgb}{0.2,0.2,0.6}
	\definecolor{DarkPurple}{rgb}{0.4,0.2,0.4}
\newcommand{\mynote}[1]{\marginpar{\tiny\sf #1}}
\newcommand{\mynote}[1]{}
\newcommand{\ex}[2]{{\ifx&#1& \mathbb{E} \else \underset{#1}{\mathbb{E}} \fi \left[#2\right]}}
\newcommand{\pr}[2]{{\ifx&#1& \mathbb{P} \else \underset{#1}{\mathbb{P}} \fi \left[#2\right]}}
\newcommand{\prob}{\mathbb{P}}
\newcommand{\eps}{\varepsilon}
\newcommand{\half}{\frac{1}{2}}
\newcommand{\hm}{\hat{\mu}}
\newcommand{\estmean}{\textsc{EstMean}}
\newcommand{\findbest}{\textsc{FindBest}}
\newtheorem{lem}{Lemma}
\newtheorem{defn}[lem]{Definition}
\newtheorem{thm}[lem]{Theorem}
\newtheorem{clm}[lem]{Claim}
\newtheorem*{clm*}{Claim}
\newtheorem*{crly*}{Corollary}
\title{Skyline Identification in Multi-Armed Bandits}
\author{Albert Cheu\thanks{\href{mailto:cheu.a@husky.neu.edu}{\texttt{cheu.a@husky.neu.edu}}}}
\author{Ravi Sundaram\thanks{Supported in part by NSF grant CCF-1535929.  \href{mailto:koods@ccs.neu.edu}{\texttt{koods@ccs.neu.edu}}}}
\author{Jonathan Ullman\thanks{Supported in part by NSF grant CCF-1718088.  \href{mailto:jullman@ccs.neu.edu}{\texttt{jullman@ccs.neu.edu}}}}
\affil{College of Computer and Information Science \\ Northeastern University}
\begin{document}
\maketitle

\begin{abstract}
We introduce a variant of the classical PAC multi-armed bandit problem.  There is an ordered set of $n$ arms $A[1],\dots,A[n]$, each with some stochastic reward drawn from some unknown bounded distribution.  The goal is to identify the \emph{skyline} of the set $A$, consisting of all arms $A[i]$ such that $A[i]$ has larger expected reward than all lower-numbered arms $A[1],\dots,A[i-1]$.  We define a natural notion of an $\eps$-approximate skyline and prove matching upper and lower bounds for identifying an $\eps$-skyline.  Specifically, we show that in order to identify an $\eps$-skyline from among $n$ arms with probability $1-\delta$,
$$
\Theta\bigg(\frac{n}{\eps^2} \cdot \min\bigg\{ \log\bigg(\frac{1}{\eps \delta}\bigg), \log\bigg(\frac{n}{\delta}\bigg) \bigg\} \bigg)
$$
samples are necessary and sufficient. When $\eps \gg 1/n$, our results improve over the na\"ive algorithm, which draws enough samples to approximate the expected reward of every arm; the algorithm of (Auer et al., AISTATS'16) for Pareto-optimal arm identification is likewise superseded. Our results show that the sample complexity of the skyline problem lies strictly in between that of best arm identification (Even-Dar et al., COLT'02) and that of approximating the expected reward of every arm.
\end{abstract}

\vfill
\newpage

\tableofcontents
\vfill
\newpage

\section{Introduction}
In (one formulation of) the classical multi-armed bandit framework, we are given a set of $n$ \emph{arms} to choose from, each of which results in some stochastic reward, and our goal is to find an arm whose expected reward is as large as possible, while drawing as few samples from the arms as possible.  One representative example of this framework comes from drug testing, where there are many available drugs, each of which has some unknown probability of success, and the goal is to find the most effective drug.  We introduce a variant of this problem where the arms are sorted, and the goal is to find all arms $A[i]$ whose expected rewards is as large as possible among the first $i$ arms.  In our representative example, the drugs may have different costs, and we want to produce a list of all drugs that are the most effective for their given cost.  In other words, we want to compute the set of arms $A[i]$ such that there is no $i' \leq i$ with higher expected reward.  Borrowing terminology from the database community, we call this the \emph{skyline} of the set of arms~\cite{BKS01}.  In this work, we define a natural PAC (\emph{Probably Approximately Correct}) version of this problem that we call \emph{$\eps$-skyline identification}, and prove worst-case sample complexity bounds that are optimal up to constant factors in terms of all of the relevant parameters. 

We remark that the problem we study can be cast as a special case of a much more general version of the Pareto-optimal arm identification problem that was recently introduced by Auer et al.~\cite{Auer}, which we discuss in more detail in Section~\ref{sec:relatedwork}.  Our results yield optimal sample complexity bounds for this special case, and may be useful for obtaining optimal sample complexity for other variants of Auer et al.'s model, and, more generally, with obtaining optimal sample complexity for other variants of the \emph{best-arm identification} problem.

\subsection{Model and Results}

Like all multi-armed bandit models, our problem begins with a set of $n$ \emph{arms} $A = \{A[1],\dots,A[n]\}$.  Each arm $A[i]$ represents an probability distribution over $[0,1]$ with unknown mean $\mu[i] = \mathbb{E}[A[i]]$.  We will sometimes abuse terminology and associate an arm $A[i]$ simply with its index $i$.  We can obtain a \emph{sample} from any arm $A[i]$, which we also refer to as \emph{pulling} arm $A[i]$.

A general objective is to learn useful information about the set of arms using as few samples as possible.  Our specific goal is to compute an $\eps$-skyline, which we define as follows.
\begin{defn}[$\eps$-skyline] \label{def:runningmax}
A set $S \subseteq [n]$ is an \emph{$\eps$-skyline of $A$} if the following conditions hold:
\begin{enumerate}
\item For any arm index $t \notin S$, the largest $s \in S$ such that $s < t$ satisfies $\mu[s] \geq \mu[t] - \eps$.
\item For every $s \in S$, and every $t \leq s$, $\mu[s] \geq \mu[t] - \eps$.
\end{enumerate}
\end{defn}
Before we proceed, we discuss how this definition is a natural notion of an approximation skyline.  An arm $t \in [n]$ is in the exact skyline if and only if $\mu[t] \geq \mu[s]$ for every $s < t$.  Suppose that an arm $t$ is convincingly in the skyline, meaning $\mu[t] \geq \mu[s] + \eps$ for every $s < t$.  Then the first condition asserts that $t \in S$.  On the other hand, suppose that an arm $t$ is convincingly not in the skyline, meaning there exists $s < t$ such that $\mu[t] \leq \mu[s] - \eps$.  Then the second condition asserts that $t \not\in S$.  Although the formulation of Definition~\ref{def:runningmax} is a slightly indirect way of capturing this intuitive notion, it will be more convenient to work with in our analysis.

Let's consider a na\"ive approach to identifying an $\eps$-skyline: draw samples from each arm to obtain an empirical estimate $\hat\mu[i] \approx \mu[i]$ and compute the exact skyline of the estimates.  If we want to draw enough samples to obtain an $\eps$-skyline with probability $1-\delta$, then by standard concentration and anti-concentration arguments it is necessary and sufficient to draw $\Theta(\frac{n}{\eps^2} \log\frac{n}{\delta})$ samples in total.  Our first result is an algorithm that improves on this na\"ive approach in the regime where $\eps$ is not too small compared to $n$.
\begin{thm}
\label{thm:main}
For $\eps, \delta \in (0,1)$, there is an algorithm that gets sample access to a set of $n$ arms $A = \{A[1],\dots,A[n]\}$ with rewards in $[0,1]$ and, with probability at least $1-\delta$, draws $O(\frac{n}{\eps^2} \log\frac{1}{\eps \delta})$ samples to return an $\eps$-skyline for $A$ with size at most $O(1/\eps)$.
\end{thm}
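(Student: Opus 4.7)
My plan is to design a single-pass online algorithm that scans the arms in index order while maintaining a ``current champion'' $c$ (the most recent arm added to $S$) together with an empirical estimate $\hat{\mu}_c$ of its mean. For each new arm $i$, I would draw $m := \Theta(\eps^{-2}\log(1/(\eps\delta)))$ samples from $A[i]$, form the empirical mean $\hat{\mu}_i$, and add $i$ to $S$ (updating $c \gets i$ and $\hat{\mu}_c \gets \hat{\mu}_i$) if and only if $\hat{\mu}_i > \hat{\mu}_c + \eps/2$. The total sample count is then $nm = O(n\eps^{-2}\log(1/(\eps\delta)))$ by construction.

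The first step is to establish correctness and the size bound under the ``good event'' $G$ that $|\hat{\mu}_i - \mu_i| \leq \eps/4$ for every $i$. Under $G$, each addition to $S$ raises $\hat{\mu}_c$ by at least $\eps/2$ while $\hat{\mu}_c \in [0, 1+\eps/4]$, giving $|S| \leq O(1/\eps)$. Condition~(1) of Definition~\ref{def:runningmax} is immediate: a skipped arm $t$ has $\hat{\mu}_t \leq \hat{\mu}_c + \eps/2$, and the two $\eps/4$-accuracy bounds yield $\mu_t \leq \mu_c + \eps$. For Condition~(2) I would proceed by induction on $|S|$: when $s$ is added, the threshold test forces $\mu_s > \mu_c$; arms strictly between the previous champion $c$ and $s$ were all skipped against $c$ and so satisfy $\mu_t \leq \mu_c + \eps \leq \mu_s + \eps$; and arms $t \leq c$ are handled by the inductive hypothesis applied to $c$.

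The main obstacle is the failure-probability analysis. Hoeffding with this choice of $m$ bounds the per-arm deviation $|\hat{\mu}_i - \mu_i| > \eps/4$ by $O(\eps\delta)$, so a naive union bound over all $n$ arms yields total failure $O(n\eps\delta)$, which matches the target $\delta$ only when $n \leq 1/\eps$. When $n \gg 1/\eps$, the claimed $\log(1/(\eps\delta))$ factor is strictly smaller than $\log(n/\delta)$, and the union bound over per-arm estimation events is genuinely too coarse. My approach would be to abandon the ``good event on every arm'' in favor of bounding the probability of a \emph{consequential} estimation error---one that actually flips the test $\hat{\mu}_i \gtrless \hat{\mu}_c + \eps/2$ in a direction that violates Definition~\ref{def:runningmax}. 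Any consequential flip on arm $t$ requires the signed joint deviation $(\hat{\mu}_t - \mu_t) - (\hat{\mu}_{c} - \mu_{c})$ (where $c$ is the champion at time $t$) to have magnitude at least $\eps/2$, a tail event whose probability is $\exp(-\Omega(m\eps^2))$ by Hoeffding applied to the difference of two independent empirical means. I would then split the confidence budget between a $|S| = O(1/\eps)$ union bound over the distinct champions (each maintained to $\pm \eps/8$ with failure probability $O(\eps\delta)$) and a Chernoff-style accounting for the $n$ per-arm joint deviations at the tighter $\eps/2$ threshold. Making this two-level budgeting match $\delta$ while keeping $m$ at the claimed $\Theta(\eps^{-2}\log(1/(\eps\delta)))$---and in particular avoiding an accidental reappearance of $\log n$---is the most delicate part of the argument.
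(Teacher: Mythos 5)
Your algorithm is essentially the na\"ive single-pass procedure, and the obstacle you flag at the end is not a delicate technicality---it is fatal to the approach. The ``consequential error'' refinement cannot rescue the $\log\frac{1}{\eps\delta}$ sample bound, because an adversary can make \emph{every} arm's threshold test consequential. Concretely, let $\mu[1]=\tfrac12$ and $\mu[t]=\tfrac12-\eps-\gamma$ for all $t\geq 2$ and a tiny $\gamma>0$. A correct $\eps$-skyline contains none of the arms $t \ge 2$ (adding any one violates Condition~2 of Definition~\ref{def:runningmax} against arm $1$), yet your algorithm adds arm $t$ whenever $\hm[t]>\hm[1]+\eps/2$. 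Conditioned on $\hm[1]$ being accurate, these are $n-1$ independent events, and by anti-concentration each has probability at least $\exp(-Cm\eps^2)=(\eps\delta)^{\Theta(1)}$ for your choice of $m=\Theta(\eps^{-2}\log\frac{1}{\eps\delta})$. Once $n\gg (\eps\delta)^{-\Theta(1)}$, at least one false acceptance occurs with probability close to $1$, and a single such acceptance already breaks correctness. Any algorithm that decides each arm by an individual empirical-mean threshold is therefore forced to take $m=\Omega(\eps^{-2}\log\frac{n}{\delta})$ samples per arm, which is exactly the na\"ive bound the theorem is meant to beat; no rebudgeting of the confidence across champions versus non-champions changes this, since the failure above involves no champion at all beyond arm $1$.

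The missing idea is to never estimate individual non-champion arms to accuracy $\Theta(\eps)$. The paper instead partitions the arms into roughly $1/\eps$ blocks of $\approx \eps n$ consecutive arms and runs the median-elimination algorithm \findbest{} of Even-Dar et al.\ on each block, which returns an $\eps$-best arm of a block of $N$ arms using only $O(\frac{N}{\eps^2}\log\frac{1}{\delta'})$ samples---crucially with \emph{no} union bound over the arms inside the block. The union bound is then paid only over the $O(1/\eps)$ blocks, giving $\delta'\approx\eps\delta$ and hence the $\log\frac{1}{\eps\delta}$ factor. A single pass of this does not suffice either (a block's best arm only certifies the block, not a skyline within it), so the paper recurses: blocks whose best arm does not beat the running champion are deactivated entirely, surviving blocks are subdivided, and a counting argument shows the number of active arms halves every two levels while the number of blocks grows only geometrically, so the total cost is a convergent geometric series dominated by the first level. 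Your single-pass, per-arm design has no analogue of either ingredient, so the proof cannot be completed along the proposed lines.
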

This result shows that finding an approximate skyline can be strictly easier than estimating the payoff of every arm.  In particular, when $\eps,\delta$ are constants, the na\"ive algorithm requires $O(n \log n)$ samples, whereas the algorithm in Theorem~\ref{thm:main} requires just $O(n)$ samples.  We give an overview of our algorithm in Section~\ref{sec:algorithm_overview}. 

Combining the na\"ive algorithm with Theorem~\ref{thm:main} yields a sample complexity upper bound of
$$
O\bigg(\frac{n}{\eps^2} \cdot \min\bigg\{ \log\bigg(\frac{1}{\eps \delta}\bigg), \log\bigg(\frac{n}{\delta}\bigg) \bigg\} \bigg).
$$
In order to show this is optimal let's consider a na\"ive lower bound on the sample complexity of $\eps$-skyline identification.  The largest-numbered arm in any $\eps$-skyline is always within $\eps$ of the maximum expected reward among all the arms, thus finding an $\eps$-skyline can be no easier than identifying an arm with nearly maximal payoff.  Combining the algorithm of Even-Dar, Mannor, and Mansou~\cite{Even-Dar} with the lower bound of Mannor and Tsitsiklis~\cite{MannorTsi}, the sample complexity of this latter problem is $\Theta(\frac{n}{\eps^2} \log \frac{1}{\delta})$.  However, this does not yield a tight lower bound for $\eps$-skyline identification.  We observe that, when $\eps \geq \frac{1}{n}$, an algorithm for $\eps$-skyline identification can be used to solve $\Omega(1/\eps)$ independent $\eps$-best arm identification problems, each on $\Omega(\eps n)$ arms.  Formalizing this intuition yields the following tight lower bound, and shows that approximate skyline identification is strictly harder than approximate best arm identification.
\begin{thm}
\label{thm:lowerbound}
Fix $\eps, \delta$ smaller than some absolute constants and $\eps \geq 1/n$.  Suppose there is an algorithm such that for \emph{every} set of $n$ arms, with probability at least $1-\delta$, the algorithm returns an $\eps$-skyline.  Then this algorithm must draw $\Omega(\frac{n}{\eps^2} \log\frac{1}{\eps \delta})$ samples.
\end{thm}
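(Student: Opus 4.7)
The plan is to formalize the intuitive reduction from the preceding paragraph: embed $k = \Theta(1/\eps)$ independent $\eps$-best-arm problems into a single skyline instance, then apply a direct-product sample-complexity lower bound.

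For the embedding, I would partition $[n]$ into consecutive blocks $B_1,\dots,B_k$ of size $m = \Theta(\eps n)$ and parametrize a family of instances by a planted tuple $(I_1,\dots,I_k)$ with $I_j \in B_j$: in block $B_j$ every arm has mean $3\eps(j-1)$ except the ``special'' arm $I_j$, which has mean $3\eps j$. Choosing the constant so that $3\eps k = 1$ keeps every mean in $[0,1]$. A two-line case analysis then shows that any valid $\eps$-skyline $S$ must satisfy $I_j = \max(S \cap B_j)$ for every $j$: Condition~1 applied to $t = I_j$ fails unless $I_j \in S$, because the best available earlier mean is $3\eps(j-1) = \mu[I_j] - 3\eps < \mu[I_j] - \eps$; and Condition~2 applied to any $s > I_j$ in $B_j$ fails, because $\mu[s] = 3\eps(j-1)$ lies $2\eps$ below the running maximum $3\eps j$. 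Thus an $\eps$-skyline algorithm on this family must correctly identify $I_j$ in all $k$ blocks simultaneously.

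With the reduction in hand, the remaining task is a direct-product lower bound for solving $k$ best-arm problems, each with $m$ arms and gap $\Theta(\eps)$, at joint success probability $\geq 1-\delta$. I would apply the Mannor--Tsitsiklis change-of-measure argument block by block: letting $T_j$ denote the (random) number of samples spent in $B_j$ and $\delta_j$ the marginal error on block $j$, one obtains $\mathbb{E}[T_j] \geq c\,\frac{m}{\eps^2}\log(1/\delta_j)$. The key step is to argue that the joint-success guarantee forces the $\delta_j$ to be balanced in the sense that $\sum_j \log(1/\delta_j) \geq k\log(k/\delta)$ (this is what convexity of $x \mapsto \log(1/x)$ gives via Jensen once we have $\sum_j \delta_j \lesssim \delta$), producing $T = \sum_j \mathbb{E}[T_j] \geq \Omega\bigl(\frac{n}{\eps^2}\log\frac{1}{\eps\delta}\bigr)$, as claimed.

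The main obstacle is making this balanced-failure step rigorous for adaptive algorithms: marginally one only gets $\delta_j \leq \delta$ per block, which costs the desired $\log(1/\eps)$ factor. To recover it I expect to exploit that under Yao's product distribution on $(I_1,\dots,I_k)$, samples drawn from arms outside $B_j$ carry no information about $I_j$. Combining a chain-rule decomposition $I(\widehat S; I_1,\dots,I_k) = \sum_j I(\widehat S; I_j \mid I_{<j})$ with a per-block Fano/KL bound, or alternatively a hybrid argument that swaps one block's planted arm out at a time and controls the change in the algorithm's observation distribution by KL, should furnish the product-form joint-failure inequality needed to close the argument.
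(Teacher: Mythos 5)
Your embedding is essentially the paper's: the paper also partitions $[n]$ into $T=\Theta(1/\eps)$ blocks of $m=\Theta(\eps n)$ arms, plants one arm per block whose mean exceeds the block's base mean by $2\eps$, staircases the base means upward across blocks, and observes that any $\eps$-skyline must contain each planted arm and no later arm of its block, so that $\max(S\cap B_j)$ recovers the planted index. (One small repair: your base means run from $0$ up to $1$, and for the first and last few blocks the Bernoulli testing problem with mean near $0$ or $1$ is much easier than $\Omega(1/\eps^2)$ per arm; the paper keeps all means in $[\tfrac14,\tfrac34]$ for exactly this reason. This is cosmetic, since a constant fraction of your blocks already lie in that range.)

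The genuine gap is the direct-product step, which is the technical heart of the theorem, and the tools you propose do not obviously close it. As you note, the joint-success guarantee only gives $\delta_j\le\delta$ per block; it does \emph{not} give $\sum_j\delta_j\lesssim\delta$ (the failure events can coincide, making the sum as large as $T\delta$), so the Jensen step has no valid premise. The chain-rule/Fano alternative is also problematic: Fano-type bounds control error through $\log m$ rather than $\log(1/\delta)$, and the per-block information a fully adaptive algorithm extracts is not bounded by $\ex{}{T_j}\eps^2/m$ once the posterior on the planted arm concentrates, so this route cannot recover the $\log\frac{T}{\delta}=\log\frac{1}{\eps\delta}$ factor that is the entire point of the theorem. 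The paper's mechanism is different and you should look at it: condition on the algorithm's full transcript $V=v$. The planted indices $(c_1,\dots,c_T)$ remain \emph{mutually independent} conditioned on $V=v$ (provable by induction on samples, since each response depends on only one $c_t$). By Markov, at least $T/2$ blocks receive at most $S(m,\eps,p)$ expected samples, hence each has marginal failure probability $>p$ by the single-block Mannor--Tsitsiklis bound; two more applications of Markov localize this to a set of transcripts of mass $\ge p/4$ on which $\Omega(pT)$ blocks each have conditional failure probability $>p/4$, and conditional independence then multiplies these into a joint failure probability $\gtrsim p^3T$. Forcing this below $\delta$ yields $p\lesssim(\delta/T)^{1/3}$, i.e.\ each of $T/2$ blocks must receive $S\bigl(m,\eps,(256\delta/T)^{1/3}\bigr)=\Omega\bigl(\frac{m}{\eps^2}\log\frac{T}{\delta}\bigr)$ samples, which sums to the claimed bound. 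Without this conditional-independence argument (or a worked-out substitute), your proof establishes only the weaker $\Omega(\frac{n}{\eps^2}\log\frac{1}{\delta})$ bound already implied by best-arm identification.
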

Note that the condition $\eps \geq 1/n$ is necessary for the lower bound to hold, since the na\"ive algorithm gives an upper bound of $O(\frac{n}{\eps^2} \log \frac{n}{\delta})$ samples for every $\eps > 0$. Theorem~\ref{thm:lowerbound} immediately implies that the na\"ive algorithm is optimal up to constant factors when $\eps < 1/n$.  Thus, by combining the na\"ive algorithm with Theorems~\ref{thm:main} and~\ref{thm:lowerbound} we obtain the sample complexity lower bound of 
$$
\Omega\bigg(\frac{n}{\eps^2} \cdot \min\bigg\{ \log\bigg(\frac{1}{\eps \delta}\bigg), \log\bigg(\frac{n}{\delta}\bigg) \bigg\} \bigg).
$$
for approximate skyline identification.  This resolves the sample complexity of approximate skyline identification up to constant factors for the entire range of the parameters $n, \eps, \delta$.

\subsection{Related Work} \label{sec:relatedwork}
The literature on multi-armed bandit problems is far too large to survey in its entirety, so we focus only on results that are closely related to ours.  The most similar work to ours is that of Auer et al.~\cite{Auer}, which considers a more general problem called \emph{Pareto-optimal arm identification}.  In this problem there are $n$ arms, each of which has a stochastic reward supported on $[0,1]^d$ with mean $\mu = (\mu_1,\dots,\mu_d) \in [0,1]^d$.  Although their parameterization is somewhat more general, they essentially prove an upper bound of $O(\frac{n}{\eps^2}\log\frac{nd}{\delta})$ to identify an $\eps$-Pareto-optimal set of arms with $1-\delta$.  The exact definition of $\eps$-Pareto optimal roughly corresponds to our definition of $\eps$-skyline, but the details are not crucial to this discussion.  Skyline identification can be cast as a special case of Pareto-optimal arm identification, but in this special case the sample complexity of their algorithm is no better than the na\"ive algorithm.  Indeed, in Auer et al.'s analysis, one of the first steps is to accurately estimate the payoff of all arms while paying a union bound in the sample complexity, and this union bound is precisely what our algorithm was designed to avoid.

There is also a related line of work on generalizations of the best-arm-identification problem in which the goal is to identify a \emph{set of $k$} arms with approximately maximal payoffs.  The first such work was by Kalyanakrishnan and Stone~\cite{KS10}, which introduced the $k$-best-arm-identification problem, and subsequent works~\cite{GGLB11, KTAS12, CGL16, CGLQW17} have proven tight upper and lower bounds, and also consider variants where the algorithm is only allowed to choose certain types of sets of $k$ arms.  Comparing our results to the upper and lower bounds of~\cite{KS10, KTAS12} reveals that the $\eps$-skyline dentification problem has the same sample complexity identifying an approximately optimal set of $\Theta(1/\eps)$ arms.  However, the two problems do not seem to be directly comparable, since an $\eps$-skyline is not necessarily a set of arms with approximately maximum reward and vice versa. 

The work by Kleinberg \cite{Kleinberg06} is also related---it develops fast-converging algorithms that find the best arm for every prefix of a countably infinite sequence of arms, under the regret minimization measure in contrast to the work in this paper which finds the best PAC arm in every prefix of a finite sequence.

\section{An Improved Algorithm for $\eps$-Skyline Identification}

\subsection{Overview of the Algorithm} \label{sec:algorithm_overview}
Our new algorithm that achieves lower sample complexity in the case of $\eps > 1/n$ can be viewed as a sample-efficient reduction from identifying an $\eps$-skyline to identifying an $\eps$-best arm.  For any set of arms $B$, with probability at least $1-\delta$, an algorithm of Even-Dar et al.~\cite{Even-Dar} finds an arm whose expected reward is within $\eps$ of the maximum among arms in $B$ using $O(\frac{|B|}{\eps^2}\log\frac{1}{\delta})$ samples.

Our reduction is roughly as follows.  Let $S$ be the set of arms our algorithm will identify, and initially set $S = \emptyset$.  Initially all of the $n$ arms are considered \emph{active}.    Our algorithm proceeds in a series of rounds, and we start by describing the first round.  Start by partitioning the arms into $t \approx \frac{20}{\eps}$ \emph{blocks} $B_1,\dots,B_{t}$, each of $\approx \frac{\eps n}{20}$ consecutive arms.\footnote{The constants in this informal description are somewhat arbitrary, and we have not optimized them.}  We then find an $\frac{\eps}{20}$-best arm $a_1,\dots,a_{t}$ for all of these blocks.  We then obtain an estimate $\hat{\mu}[a_1],\dots,\hat{\mu}[a_t]$ of the expected reward of each of these arms.  Using the algorithm of~\cite{Even-Dar} and paying a union bound over the blocks, we can do all of this using just $O(\frac{n}{\eps^2}\log \frac{1}{\eps \delta})$ samples.  

Now we scan the blocks from $1$ to $t$ (lower numbered arms are considered first) looking for arms to add to the skyline.  As we go through, each block $B_i$ will be either \emph{deactivated} or will remain (partially) active, depending on our estimate $\hat{\mu}[a_i]$ of the expected reward of the best arm in that block.  When we get to block $B_i$, suppose that $B_j$ is the last block that we kept active.  Then if $\hat{\mu}[a_i] > \hat{\mu}[a_j] + \frac{\eps}{5}$, we may need to include the arm $a_i$ in order to have an approximate skyline, so we add $a_i$ to the set $S$, and we leave block $B_i$ active.  (Our actual algorithm deactivates some of the arms in $B_i$ but this detail is not crucial for this informal summary.)  On the other hand, if $B_j$ is the last block that we left active and $\hat{\mu}[a_i] \leq \hat{\mu}[a_j] + \frac{\eps}{5}$, then no arm in $B_i$ dominates the best arm in $B_j$, which is already included in the set $S$.  Thus, we can safely deactivate all the arms in $B_i$.  

At the end of this process, some blocks will be deactivated and some blocks will remain at least partially active.  The key fact is that every time we keep a block $B_i$ active, its true expected reward must be at least $\frac{\eps}{10}$ larger than the last block we kept active, and therefore only $\frac{10}{\eps}$ blocks remain active, and these contain a total of at most $\frac{10}{\eps} \cdot \frac{\eps n}{20} = \frac{n}{2}$ arms.  Thus, we have made significant progress in the sense that we have eliminated a constant fraction of the arms from consideration while still leaving enough arms to guarantee that we can form an approximate skyline.  Now we can recurse on this set remaining $\frac{n}{2}$ arms by further dividing each of the active blocks into $\approx \frac{20}{\eps}$ sub-blocks.  We repeat essentially the same process on these sub-blocks.  See Figure~\ref{picture} for a diagram of how the algorithm progresses.

In the rest of this section we describe and analyze our algorithm, and thereby prove Theorem~\ref{thm:main}.  Before describing the algorithm in detail, we will need to introduce a few useful subroutines, and some helpful terminology.
\subsection{Useful Algorithmic Subroutines}
The first primitive simply captures the fact that we can estimate the mean of any given arm with high confidence by drawing a sufficient number of samples.  We will denote this procedure by the subroutine \estmean.  The proof of Lemma~\ref{lem:estmean} is a standard application of Hoeffding's Inequality, and is omitted.
\begin{lem}[\estmean] \label{lem:estmean}
There is an algorithm \estmean~that takes a tuple $(A[i],\eps,\delta)$ as input, draws $O(\frac{1}{\eps^2}\log \frac{1}{\delta})$ samples from $A[i]$ and outputs an estimate $\hm[i] \in[0,1]$ such that, with probability at least $1-\delta$, $|\mu[i] - \hm[i]| \leq \eps.$
\end{lem}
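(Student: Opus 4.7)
The plan is to instantiate \estmean{} as the standard empirical-mean estimator: draw $m$ independent samples $X_1,\dots,X_m$ from $A[i]$ and output $\hm[i] = \frac{1}{m}\sum_{j=1}^m X_j$. Since $\mu[i] = \mathbb{E}[X_j]$ and each $X_j \in [0,1]$, I would then invoke Hoeffding's inequality to bound the deviation of this empirical mean from $\mu[i]$.

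Concretely, Hoeffding's inequality gives $\pr{}{|\hm[i] - \mu[i]| > \eps} \leq 2\exp(-2m\eps^2)$. Setting this failure probability to be at most $\delta$ and solving for $m$ yields the choice $m = \lceil \frac{1}{2\eps^2}\ln\frac{2}{\delta} \rceil$, which is $O(\frac{1}{\eps^2}\log\frac{1}{\delta})$ samples. With this choice of $m$, the output satisfies $|\mu[i] - \hm[i]| \leq \eps$ with probability at least $1-\delta$, and clearly $\hm[i] \in [0,1]$ because it is an average of values in $[0,1]$.

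There is no real obstacle: the samples from a fixed arm $A[i]$ are i.i.d. and bounded in $[0,1]$, so Hoeffding applies directly, and the sample complexity claim follows by inspection of the resulting bound on $m$. The only thing worth double-checking is the constant in the exponent of Hoeffding's bound (which for $[0,1]$-valued random variables is $2$), but this only affects the hidden constant in the $O(\cdot)$ and not the asymptotic rate.
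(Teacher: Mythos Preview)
Your proposal is correct and is exactly the argument the paper has in mind: the paper omits the proof entirely, noting only that it is ``a standard application of Hoeffding's Inequality.'' Your instantiation of \estmean{} as the empirical mean with $m = \lceil \frac{1}{2\eps^2}\ln\frac{2}{\delta}\rceil$ samples and the accompanying Hoeffding bound is precisely that standard application.
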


The second subroutine is an algorithm of Even-Dar et al.~\cite{Even-Dar} that identifies an arm with approximately maximum payoff among any set of arms $A$, using sample complexity that is just linear in $|A|$.  First, we introduce some useful terminology for discussing multi-armed bandit problems.
\begin{defn}[$\eps$-Better and $\eps$-Best]
\label{def:bestarm}
Arm $A[i]$ is \emph{$\eps$-better} than arm $A[j]$ if the expected payoff of $A[i]$ is at least within $\eps$ of that of $A[j]$.  That is, $\mu[i] \geq \mu[j] - \eps$.  For a set of arms $A$, $A[i] \in A$ is an \emph{$\eps$-best arm for $A$} if $A[i]$ is $\eps$-better than all arms $A[j] \in A$.  That is $\forall A[j]\in A~~\mu[i] \geq \mu[j] - \eps$.
\end{defn}

\begin{thm}[\findbest] \label{thm:findbest}
There is an algorithm \findbest\ that takes a tuple $(A', \eps, \delta)$, where $A' \subseteq A$ is a set of arms and $\eps,\delta$ are parameters, draws $O(\frac{|A'|}{\eps^2}\log \frac{1}{\delta})$ samples from the arms in $A'$, and outputs an arm $A[i] \in A'$ such that, with probability at least $1-\delta$, $A[i]$ is an $\eps$-best arm for $A'$.
\end{thm}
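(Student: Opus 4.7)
The plan is to reproduce the median-elimination scheme of Even--Dar et al. The algorithm runs in $O(\log|A'|)$ phases: each phase samples every surviving arm a fixed number of times, estimates its mean, and retains only the half of arms with the largest empirical means. Letting the surviving set halve geometrically while the per-phase accuracy and confidence budgets shrink geometrically is what keeps the cumulative cost linear in $|A'|$ rather than $|A'|\log|A'|$.

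Concretely, initialize $S_1 = A'$, $\eps_1 = \eps/4$, $\delta_1 = \delta/2$. In phase $\ell$ draw $n_\ell = \Theta(\eps_\ell^{-2}\log(1/\delta_\ell))$ samples from each $i \in S_\ell$, form estimates $\hat\mu_i$, let $S_{\ell+1}$ be the top half of $S_\ell$ by $\hat\mu_i$, and update $\eps_{\ell+1} = (3/4)\eps_\ell$ and $\delta_{\ell+1} = \delta_\ell/2$. Phase $\ell$ uses $|S_\ell|\cdot n_\ell$ samples; since $|S_\ell| \leq |A'|/2^{\ell-1}$, $\eps_\ell^{-2} = (16/9)^{\ell-1}\cdot 16/\eps^2$, and $\log(1/\delta_\ell) = O(\ell + \log(1/\delta))$, the phase cost has a $(8/9)^{\ell-1}$ geometric decay times an $O(\ell + \log(1/\delta))$ factor, and the series sums to $O(|A'|\eps^{-2}\log(1/\delta))$ as required.

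For correctness, let $\mu^*_\ell = \max_{i\in S_\ell}\mu_i$, attained by some $a^*_\ell$. The key phase lemma is: with probability $\geq 1-\delta_\ell$, some arm of $S_{\ell+1}$ has true mean $\geq \mu^*_\ell - \eps_\ell$. Telescoping this across phases and using $\sum_\ell \eps_\ell = \eps_1 \sum (3/4)^{\ell-1} = \eps$ shows the final surviving arm is $\eps$-best, while a union bound gives overall failure probability $\leq \sum_\ell \delta_\ell \leq \delta$. The main obstacle is proving the phase lemma \emph{without} a union bound over $|S_\ell|$ arms, which would inflate $n_\ell$ by a $\log|A'|$ factor and wreck the sample budget. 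The Even--Dar trick is a Markov argument on the count of ``confused'' arms: by Hoeffding, an individual arm with true mean below $\mu^*_\ell - \eps_\ell$ has probability $\lesssim \delta_\ell$ of producing $\hat\mu_i \geq \mu^*_\ell - \eps_\ell/2$, so the expected number of such arms is $\lesssim \delta_\ell|S_\ell|$; Markov then shows with probability $\geq 1 - O(\delta_\ell)$ that fewer than $|S_\ell|/2$ of them cross the threshold, forcing at least one arm of true value $\geq \mu^*_\ell - \eps_\ell$ to survive into the top half.
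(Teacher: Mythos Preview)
The paper does not prove this theorem; it is quoted as a black-box result of Even-Dar, Mannor, and Mansour and used only as a subroutine. Your proposal correctly reconstructs their median-elimination argument, with the right geometric schedule $(\eps_\ell,\delta_\ell)=((3/4)^{\ell-1}\eps/4,\,\delta/2^\ell)$ so that the per-phase cost decays like $(8/9)^{\ell-1}$ and the accuracies telescope to $\eps$.

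One small point in your phase-lemma sketch: the Markov bound on the number of ``bad'' arms with $\hat\mu_i \geq \mu^*_\ell - \eps_\ell/2$ does not by itself force a good arm into the top half---you also need the complementary Hoeffding event that the current best arm $a^*_\ell$ itself satisfies $\hat\mu_{a^*_\ell} \geq \mu^*_\ell - \eps_\ell/2$ (probability $\geq 1-\delta_\ell/2$, say). Conditioned on that, if $a^*_\ell$ is eliminated then all $|S_\ell|/2$ survivors have empirical mean at least $\mu^*_\ell - \eps_\ell/2$, and your Markov argument then guarantees at least one of them is good. This is the standard two-event decomposition in the original proof; with it included, the sketch is complete.
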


\subsection{The Algorithm}
We are now ready to describe our algorithm.  The bulk of the work is done in \Cref{algo:main}, which identifies an approximately optimal set of arms.  We also define an auxiliary procedure\Cref{algo:truncate} that pares down the set of arms to have size $O(1/\eps)$, which will establish the final guarantee of Theorem~\ref{thm:main}.  We remark that, by Definition~\ref{def:runningmax}, the first arm is always part of an $\eps$-skyline, thus it will be more notationally convenient in our algorithm if we consider a set of $n+1$ arms labeled $A[0], A[1],\dots,A[n]$.

\begin{algorithm}
\caption{}
\label{algo:main}
{\bf Input:} Sample access for arms $A = \{A[0],A[1],\dots,A[n]\}$ and error parameters $\eps,\delta$.\\
{\bf Output:} An $\eps$-skyline of $S \subseteq A$.
\begin{algorithmic}[1]
\STATE $S\leftarrow \{0\}$\COMMENT{A set to hold the $\eps$-skyline}
\STATE $\hm[0]\leftarrow$ \estmean$(A[0],\eps/12,\delta/2)$
\STATE $M\leftarrow \{\hm[0]\}$\COMMENT{A set to hold measurements of arms in $S$}
\STATE $B_1\leftarrow \{[1,2,\ldots n]\}$\COMMENT{The set of blocks for the 1st level}
\FOR[Outer loop over levels]{$\ell = 1,2,\dots$}
	\STATE $b_\ell \leftarrow |B_\ell|$
    \IF[Stop when no blocks are left]{$b_\ell=0$}
        \STATE \textbf{break}
	\ENDIF
    
	\STATE $\delta_\ell \leftarrow \frac{\delta}{2^{\ell+1}}$
    \STATE $B_{\ell+1}\leftarrow \emptyset$\COMMENT{The set of blocks for the next level (initially empty)}
    
    \FORALL[Inner loop over blocks in this level]{$m: = 1,\dots, b_\ell$}
    	\STATE $\delta_{\ell,m} \leftarrow \delta_\ell / b_\ell$
        
    	\STATE $[i_{\ell,m}\ldots j_{\ell,m}]\leftarrow B_{\ell,m}$\COMMENT{$m^{th}$ block of $\ell^{th}$ level}
        \STATE $prev\leftarrow$ largest member of $S$ to the left of $i_{\ell,m}$
        \STATE $L_{\ell,m}\leftarrow (\eps/2)+\hm[prev]$ \COMMENT{Retrieve $\hm[prev]$ from $M$}
		\STATE $k_{\ell,m}\leftarrow$ \findbest $(B_{\ell,m}, \eps/12, \delta_{\ell,m}/2)$
    	\STATE $\hm[k_{\ell,m}]\leftarrow$ \estmean$(A[k_{\ell,m}], \eps/12,\delta_{\ell,m}/2)$
        
    	\IF{$L_{\ell,m} +\eps/4 > \hm[k_{\ell,m}]$}
        	\STATE \textbf{continue}\COMMENT{Deactivate this block (red in Fig. 1)}
        \ENDIF
        
        \STATE $S\leftarrow S\cup k_{\ell,m}$ \COMMENT{Otherwise, store index (blue in Fig. 1)}
        \STATE $M\leftarrow M\cup \hm[k_{\ell,m}]$ \COMMENT{Store measurement}
        \STATE $U_{\ell,m}\leftarrow \hm[k_{\ell,m}] + \eps/6$ \COMMENT{Upper bound on arm quality in block}
        
        \STATE $b_{\ell,m}\leftarrow (4/\eps)(U_{\ell,m}-L_{\ell,m})$ \COMMENT{Target no. of blocks to produce}
        \STATE Break $[i_{\ell,m}\ldots k_{\ell,m}-1]$ into blocks of size $\max(1,\lfloor (k_{\ell,m}-i_{\ell,m})/b_{\ell,m} \rfloor)$
        \STATE $B_{\ell+1}\leftarrow B_{\ell+1} \cup$ those new blocks
	\ENDFOR
\ENDFOR
\RETURN{$S$}
\end{algorithmic}

\end{algorithm}

\begin{algorithm}
\caption{}
\label{algo:truncate}
{\bf Input:} The $\eps$-skyline $S$ output by \Cref{algo:main} and the set $M$ from its execution \\
{\bf Output:} An $\eps$-skyline $S$ of size $O(1/\eps)$
\begin{algorithmic}[1]
\STATE $s\leftarrow 0$
\WHILE{$s\neq \max S$}
	\STATE Let $s'$ denote the next member of $S$
    \STATE Fetch $\hm[s],\hm[s']$ from $M$\COMMENT{These estimates were already obtained by \Cref{algo:main}}
	\IF{$\hm[s] + 3\eps/4 > \hm[s']$}
    	\STATE Remove $s'$ from $S$
    \ELSE
    	\STATE $s\leftarrow s'$
	\ENDIF
\ENDWHILE
\RETURN{$S$}
\end{algorithmic}
\end{algorithm}

\begin{figure}
\begin{center}
\includegraphics[width=.75\textwidth]{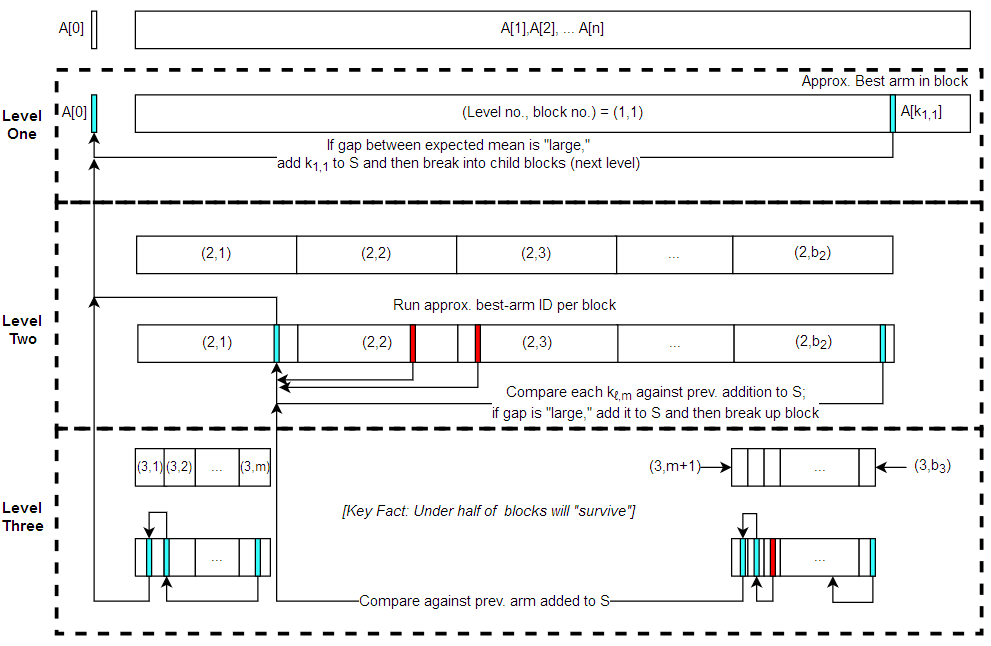}
\caption{Sample execution of \Cref{algo:main}. Colored slits indicate best arms of blocks. Blue slits are added to $S$, red are not. Arrows indicate comparisons between measurements of arms. Red blocks are deactivated while portions of blue blocks are broken into sub-blocks.}
\label{picture}
\end{center}
\end{figure}

\subsection{Analysis}

To aid in the analysis, we recall some notation for the algorithm and its execution.  The algorithm proceeds in levels and in each level the arms will be partitioned into blocks.  We use $\ell$ to index levels and $m$ to index blocks, so that $B_{\ell, m}$ is the $m$-th block in the $\ell$-th level (which is given the shorthand $(\ell, m)$ in Figure \ref{picture}).  We use $n_{\ell, m}$ to denote the number of arms in block $B_{\ell,m}$ and $n_{\ell}$ to denote the number of arms in level $\ell$.  For each block, $B_{\ell, m}$ the algorithm $\findbest$ outputs an arm $k_{\ell,m}$ that is purported to be an approximate best arm in $B_{\ell,m}$, and then we call $\estmean$ to obtain an estimate $\hat{\mu}[k_{\ell,m}] \approx \mu[k_{\ell,m}]$.  We let $\mu^*_{\ell,m}$ denote the exact mean of the exact best arm in $B_{\ell, m}$.

First, we will show that with high probability, all calls to \estmean~and \findbest~return ``correct'' outputs.  This will allow us to proceed with all of our analysis as if these two algorithms succeed with probability $1$.  To do so, we first define key events.

Consider round $\ell,m$ in \Cref{algo:main}. We shall say the \findbest\ call at that round succeeds if it identifies an arm $A[k_{\ell,m}]$ that is close-to-best in the block:
\begin{equation*}
\mu^*_{\ell,m}-\mu[k_{\ell,m}]<\eps/12 \stepcounter{equation}\tag{\theequation}\label{eq:goodident}
\end{equation*}
and, similarly, \estmean\ succeeds if the measurement of $A[k_{\ell,m}]$ has low error:
\begin{equation*}
|\mu[k_{\ell,m}]-\hm[k_{\ell,m}]|<\eps/12 \stepcounter{equation}\tag{\theequation}\label{eq:goodmsmnt}
\end{equation*}

Let $E_{\ell,m}$ be the event that both calls to \estmean\ and \findbest\ succeed in round $(\ell,m)$. Likewise, let $E_\ell$ be the event that all calls to \estmean\ and \findbest\ succeed in level $\ell$.  Let $E$ be the success event for the whole of \Cref{algo:main}: $E$ is the event that line 1's \estmean\ succeeds and $E_{\ell,m}$ occurs for all rounds of the nested loop.  We prove that all calls are successful with high probability.
\begin{clm}\label{clm:whp}
The probability that $E$ occurs is at least $1-\delta$.
\end{clm}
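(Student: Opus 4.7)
The plan is a straightforward union bound, carefully balanced against the geometric schedule of failure probabilities $\delta_\ell = \delta/2^{\ell+1}$ and $\delta_{\ell,m} = \delta_\ell/b_\ell$ chosen by \Cref{algo:main}.

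First I would set up the per-call bounds. Within a single round $(\ell,m)$, the algorithm makes exactly two randomized calls: one to \findbest\ with failure parameter $\delta_{\ell,m}/2$ and one to \estmean\ with failure parameter $\delta_{\ell,m}/2$. By \Cref{thm:findbest} and \Cref{lem:estmean}, each of these individually fails with probability at most $\delta_{\ell,m}/2$, so a union bound inside the round gives $\Pr[\neg E_{\ell,m}] \leq \delta_{\ell,m}$.

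Next I would aggregate across blocks within a level. Since there are exactly $b_\ell$ blocks at level $\ell$ and $\delta_{\ell,m} = \delta_\ell/b_\ell$ for every block, summing gives $\Pr[\neg E_\ell] \leq b_\ell \cdot (\delta_\ell/b_\ell) = \delta_\ell$. Then I would union-bound over all levels using $\delta_\ell = \delta/2^{\ell+1}$, yielding
\[
\Pr\!\left[\bigcup_{\ell \geq 1} \neg E_\ell\right] \leq \sum_{\ell=1}^{\infty} \frac{\delta}{2^{\ell+1}} = \frac{\delta}{2}.
\]
Finally, the \estmean\ call on line~1 (with parameter $\delta/2$) fails with probability at most $\delta/2$, and one last union bound combines it with the rest to give $\Pr[\neg E] \leq \delta/2 + \delta/2 = \delta$.

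There is no real obstacle here — the schedule $\delta_\ell = \delta/2^{\ell+1}$ was engineered precisely so the geometric series sums to $\delta/2$, leaving room for the initial \estmean\ call. The only subtlety worth noting is that the argument is robust to the algorithm running for arbitrarily (even infinitely) many levels, since the series converges regardless of how many levels are actually reached; in particular we do not need a termination bound on $\ell$ to carry out the union bound.
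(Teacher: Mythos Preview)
Your proposal is correct and follows essentially the same approach as the paper's proof: a union bound over the two calls per round, then over the $b_\ell$ blocks per level, then over levels via the geometric series $\sum_{\ell\ge 1}\delta/2^{\ell+1}=\delta/2$, combined with the $\delta/2$ budget for the initial \estmean\ call. The paper's write-up is slightly terser (it jumps directly to $\Pr[\neg E_\ell]\le\delta_\ell$ without isolating $\Pr[\neg E_{\ell,m}]$), but the argument is identical.
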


\begin{proof}

In level $\ell$, we make $2b_\ell$ calls to \findbest\ and \estmean, each with error-probability parameter $\delta_{\ell,m}/2=\delta_\ell/(2b_\ell)$. From \Cref{thm:findbest} and \Cref{lem:estmean}, the probability that any one of the calls does not succeed is $\prob(\neg E_\ell)\leq \delta_\ell$.  Since $\delta_\ell = \frac{\delta}{2^{\ell+1}}$, the failure probability over all levels is $\leq \delta/2$.  The probability that \estmean\ on line 1 fails is also $\leq \delta/2$.  Having covered all executions of \estmean\ and \findbest, we conclude that the probability of failure is $\prob(\neg E)\leq \delta$.
\end{proof}

We now make two claims that state that if $E$ holds, then \Cref{algo:main}~outputs an $\eps$-skyline of $A$ and has the desired sample complexity.  The proofs of these claims are in Sections~\ref{sec:correctnessproof} and~\ref{sec:complexityproof}, respectively.
\begin{clm}\label{clm:correctness}
If $E$ occurs, then \Cref{algo:main}\ outputs an $\eps$-skyline of $A$.
\end{clm}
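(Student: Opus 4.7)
My plan is to condition on $E$ throughout and extract three deterministic facts about each processed block $B_{\ell,m}$, writing $p = prev(\ell,m)$ for the local predecessor: \textbf{(A)} if $B_{\ell,m}$ is \emph{activated}, then $\mu[k_{\ell,m}] > \mu[p] + 7\eps/12$; \textbf{(B)} if $B_{\ell,m}$ is \emph{deactivated}, then $\mu[t] < \mu[p] + \eps$ for every $t \in B_{\ell,m}$; and \textbf{(C)} for every block and every $t \in B_{\ell,m}$, $\mu[t] \leq \mu[k_{\ell,m}] + \eps/12$. All three are routine computations that combine the thresholds $\eps/4$, $\eps/2$, $\eps/6$ in lines 16--23 with the $\eps/12$ slack guaranteed by \estmean\ and \findbest.

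The key auxiliary lemma I would prove is a near-monotonicity of $S$ with respect to index: for any $s_1,s_2 \in S$ with $s_1 \leq s_2$, $\mu[s_1] \leq \mu[s_2] + \eps/12$. The proof is a strong induction on the index of $s_2$. Writing $s_2 = k_{\ell,m}$ and $p = prev(\ell,m)$, the inductive step splits on whether $s_1 \in B_{\ell,m}$, in which case \textbf{(C)} directly gives $\mu[s_1] \leq \mu[s_2] + \eps/12$, or $s_1 < i_{\ell,m}$, in which case $s_1 \leq p$ (since $p$ is by definition the largest $S$-element below $i_{\ell,m}$) and the induction hypothesis applied to the pair $(s_1, p)$ combined with \textbf{(A)} yields $\mu[s_1] \leq \mu[p] + \eps/12 < \mu[s_2] - \eps/2$.

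Next I would verify the two conditions of Definition~\ref{def:runningmax} in turn. For condition~1 I observe that any $t \notin S$ must finally reside either in the lost interval $(k_{\ell',m'}, j_{\ell',m'}]$ of some activated block $B_{\ell',m'}$ or in a fully deactivated block $B_{\ell',m'}$ with predecessor $q = prev(\ell',m')$. A short structural argument --- every future $S$-addition that lands strictly between $q$ (resp.\ $k_{\ell',m'}$) and $t$ would have to come from a sub-block of some previously-activated block, whose range is in turn bounded above by that activated block's own $k$, which is itself already $\leq q$ (resp.\ $= k_{\ell',m'}$) --- implies that $q$ (resp.\ $k_{\ell',m'}$) is exactly the final immediate predecessor of $t$ in $S$. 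Facts \textbf{(B)} and \textbf{(C)} then immediately yield that this predecessor has $\mu$-value at least $\mu[t] - \eps$. For condition~2 I would induct on $s \in S$ by index, writing $s = k_{\ell,m}$, $p = prev(\ell,m)$, and partitioning the arms $t \leq s$ into three groups: arms inside $B_{\ell,m}$ (bounded by \textbf{(C)}), $S$-elements with $t \leq p$ (bounded by the near-monotonicity lemma combined with \textbf{(A)}, yielding slack $\eps/2$), and non-$S$ arms with $t < i_{\ell,m}$ (whose associated reference element in $S$ is $\leq p$, so the near-monotonicity lemma gives a $\mu[p] + \eps/12$ bound, and chaining with \textbf{(B)} or \textbf{(C)} plus \textbf{(A)} gives $\mu[t] < \mu[s] + \eps/2$ in the worst case).

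The main obstacle is getting the near-monotonicity lemma to close with the right induction variable. A time-order induction loses a factor of $\eps$ in the third case of condition~2, because a reference element added \emph{after} $s$ as a descendant sub-block need only be within $\eps/12$ of a block-ancestor whose $\mu$-value can far exceed $\mu[s]$. Inducting on index order sidesteps this entirely since the local predecessor $p$ always has strictly smaller index than $s$ and so is available to the induction hypothesis. The rest of the argument is mechanical bookkeeping of the constants $\eps/12$, $7\eps/12$, $3\eps/4$, and $\eps$.
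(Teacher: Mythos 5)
Your proposal is correct and follows essentially the same route as the paper: condition 1 is handled by the same two-case analysis of how $t$ came to be deactivated (dropped to the right of an activated block's $k_{\ell,m}$ versus swept out with a fully deactivated block), and condition 2 by an index-order induction over $S$ that rests on the same step-up inequality $\mu[k_{\ell,m}] > \mu[prev] + 7\eps/12$ and the same structural fact that $prev$ remains the final immediate predecessor because the interval $(prev, i_{\ell,m})$ is permanently deactivated. The only cosmetic difference is that you package the induction as a near-monotonicity statement about the elements of $S$ and then chain non-$S$ arms through their reference elements, whereas the paper carries the hypothesis that each earlier $S$-element is an $\eps$-best arm of its own prefix and decomposes $[0,k_{\ell,m}]$ into four explicit subintervals; the two bookkeeping schemes are interchangeable and yield the same constants.
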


\begin{clm}\label{clm:complexity}
If $E$ occurs, then \Cref{algo:main}\ takes $O(\frac{n}{\eps^2}\log \frac{1}{\eps\delta})$ samples.
\end{clm}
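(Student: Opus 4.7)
The plan is to decompose the total sample cost by level and round, apply the sample-complexity guarantees of \Cref{thm:findbest} and \Cref{lem:estmean} round-by-round, and then show that the per-level costs form a geometric series whose sum is dominated by the first term. First I would compute the cost of a single round $(\ell,m)$: by \Cref{thm:findbest} and \Cref{lem:estmean}, the calls $\findbest(B_{\ell,m},\eps/12,\delta_{\ell,m}/2)$ and $\estmean(A[k_{\ell,m}],\eps/12,\delta_{\ell,m}/2)$ together draw $O(n_{\ell,m}\eps^{-2}\log(1/\delta_{\ell,m}))$ samples, where $n_{\ell,m}:=|B_{\ell,m}|$. Since $\delta_{\ell,m}=\delta_\ell/b_\ell=\delta/(2^{\ell+1}b_\ell)$, this rewrites as $O(n_{\ell,m}\eps^{-2}(\ell+\log b_\ell+\log(1/\delta)))$, and summing over the $b_\ell$ blocks at level $\ell$ yields a per-level bound of $O(n_\ell\eps^{-2}(\ell+\log b_\ell+\log(1/\delta)))$ where $n_\ell=\sum_m n_{\ell,m}$.

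Next I would establish two structural bounds. For the block count I would show $b_\ell\le 1+4/\eps$ for every $\ell$: starting from $b_1=1$ and using the identity $b_{\ell+1}=\sum_{\text{active }m}b_{\ell,m}=(4/\eps)\sum_{\text{active }m}(U_{\ell,m}-L_{\ell,m})$, because $L_{\ell,m}=\eps/2+\hm[\mathrm{prev}]$ and $U_{\ell,m}=\hm[k_{\ell,m}]+\eps/6$, the sum telescopes along the chain of arms successively accepted into $S$ and is thus bounded by $1+O(\eps)$ (the span of $\hm$-values in $[0,1]$). For the arm count I would show $n_\ell$ shrinks by at least a constant factor per level, i.e. $n_\ell\le n\cdot r^{\ell-c}$ for absolute constants $r<1$ and $c=O(1)$. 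This combines (i) the fact that, on event $E$, at most $O(1/\eps)$ blocks per level are \emph{active}, because the activation condition $\hm[k_{\ell,m}]\ge L_{\ell,m}+\eps/4$ together with $\eps/12$-accurate \estmean\ estimates forces the true means of accepted arms to rise by at least $3\eps/4-2(\eps/12)=7\eps/12$ at each step, with (ii) the fact that every active block at level $\ell+1$ is an equal-sized piece of its active parent block chopped into $b_{\ell,m}$ parts, so that the ``(active count)\,$\times$\,(max piece size)\,$\le n_\ell/2$'' estimate sketched in \Cref{sec:algorithm_overview} applies.

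Plugging both bounds into the per-level formula yields
\begin{equation*}
\sum_\ell O\!\left(\frac{n_\ell}{\eps^2}(\ell+\log b_\ell+\log(1/\delta))\right)
\le\sum_\ell O\!\left(\frac{n\,r^\ell}{\eps^2}\bigl(\ell+\log(1/\eps)+\log(1/\delta)\bigr)\right)
=O\!\left(\frac{n}{\eps^2}\log\frac{1}{\eps\delta}\right),
\end{equation*}
using $\sum_\ell r^\ell(\ell+C)=O(C)$ for any absolute constant $r<1$. The main obstacle is step (ii): making the halving tight across levels where different active parents are subdivided into very different numbers of pieces, ranging from the per-parent minimum $b_{\ell,m}\ge 5/3$ (which yields only modest shrinkage $s_{\ell+1}\le(3/5)s_\ell$ of the maximum piece size) up to the global cap $b_{\ell+1}\le 4/\eps$. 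A naive charge that combines these two bounds separately loses an extra factor of $\log(1/\eps)$, so the careful accounting must couple the size of each parent's pieces to the subdivision budget $b_{\ell,m}$ that produced them, and telescope against the global $O(1/\eps)$ budget on sub-blocks per level.
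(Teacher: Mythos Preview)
Your high–level decomposition (per-round cost $\to$ per-level cost $\to$ geometric sum over levels) is exactly the paper's, and your per-level formula $C_\ell = O\bigl(\tfrac{n_\ell}{\eps^2}(\ell+\log b_\ell+\log\tfrac{1}{\delta})\bigr)$ is correct. The two structural bounds, however, need repair.

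\textbf{The block-count bound.} Your claim $b_\ell\le 1+4/\eps$ via telescoping is false. The sum $\sum_{\text{active }m}(\hm[k_{\ell,m}]-\hm[prev_{\ell,m}])$ does \emph{not} telescope along the level-$\ell$ accepted arms: for the $i$-th active block at level $\ell$, $prev_{\ell,m_i}$ is the largest element of $S$ to its left, which need not be $k_{\ell,m_{i-1}}$; it may instead be an arm inserted at an earlier level (e.g.\ the parent's own $k$), sitting strictly between $k_{\ell,m_{i-1}}$ and $i_{\ell,m_i}$. All one can say (via \Cref{lem:nofalsepos} and \eqref{eq:goodmsmnt}) is $\hm[prev_{\ell,m_i}]>\hm[k_{\ell,m_{i-1}}]-7\eps/6$, which introduces an additive $O(\eps)$ slack per active block. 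This is exactly what the paper does, and it yields the recurrence $b_{\ell+1}<\tfrac{10}{3}b_\ell+\tfrac{4}{\eps}$, i.e.\ $b_\ell=O\bigl((\tfrac{10}{3})^\ell/\eps\bigr)$, \emph{not} $O(1/\eps)$. Fortunately this is harmless: only $\log b_\ell=O(\ell+\log\tfrac{1}{\eps})$ enters the cost, and the extra $\ell$ is already absorbed.

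\textbf{The arm-count bound.} You correctly identify that the global ``(active count)$\times$(max piece size)'' charge fails, and that the fix must be local to each parent. The paper's argument is exactly this localization, and it is cleaner than the telescoping you suggest at the end: for a single parent $B_{\ell,m}$, the accepted children have true means that increase by at least $\eps/2$ each step (since here $prev$ really is the previous accepted sibling), and all lie in the interval $(L_{\ell,m},U_{\ell,m}]$; hence at most $(2/\eps)(U_{\ell,m}-L_{\ell,m})=b_{\ell,m}/2$ children can be accepted. Since the $b_{\ell,m}$ children have (nearly) equal size, at most half of the parent's arms survive, giving $n_{\ell+2}\le n_\ell/2$ block-by-block with no global budget needed. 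Once you do the accounting per parent, the ``different parents subdivide into very different numbers of pieces'' obstacle you raise simply disappears.
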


\subsubsection{Correctness of \Cref{algo:main}\ (\Cref{clm:correctness})} \label{sec:correctnessproof}

Recall from \Cref{def:runningmax} that $S$, the output of \Cref{algo:main}, is an $\eps$-skyline of $A$ if and only if the following two hold:
\begin{enumerate}
\item For any arm index $t\notin S$, the largest $s\in S$ such that $s < t$ satisfies $\mu[s] \geq \mu[t]-\eps$
\item Any $s\in S$ is the index to an $\eps$-best arm of $\{A[0],A[1],\ldots, A[s]\}$
\end{enumerate}
We will prove each of these two points separately.  Throughout this section we will condition on the event $E$, that is we will assume throughout that~\eqref{eq:goodident} and \eqref{eq:goodmsmnt} hold.

\begin{lem}
\label{lem:nofalseneg}
If $E$ occurs, then for any $t\notin S$, the largest $s\in S$ such that $s < t$ satisfies $\mu[s] \geq \mu[t]-\eps$
\end{lem}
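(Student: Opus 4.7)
The plan is to perform a case analysis on how arm $t$ fails to be added to $S$, and to identify $s$ explicitly in each case.

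First, I would trace the fate of $t$ through the levels. At each level $\ell$, either $t$ lies in some block $B_{\ell, m}$ that survives to the next level (which happens only if the block is not deactivated and $t < k_{\ell, m}$, since if $t = k_{\ell, m}$ then $t \in S$ contradicting the hypothesis, and arms with index $> k_{\ell, m}$ are not passed to subsequent sub-blocks), or $t$ is ``eliminated''. Since the algorithm terminates, there is some level $\ell^*$ and block index $m^*$ at which $t$ is eliminated, and there are exactly two ways this can happen: \emph{(A)} the block $B_{\ell^*, m^*}$ containing $t$ is deactivated, or \emph{(B)} the block is kept but $t > k_{\ell^*, m^*}$.

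Next, I would argue that the identity of $s$ is forced in each case by examining which later additions to $S$ can possibly land in the interval $(\cdot\,, t)$. Any block processed after $(\ell^*, m^*)$ is either a later block at some level $\geq \ell^*$ whose indices all lie to the right of $j_{\ell^*, m^*} \geq t$ (so it cannot contribute an index $< t$), or a sub-block descending from a previously-kept block $B_{\ell^*, m_0}$ with $m_0 < m^*$ whose indices all lie in $[i_{\ell^*, m_0}, k_{\ell^*, m_0} - 1]$; in case (A) no sub-blocks descend from $B_{\ell^*, m^*}$ itself, and every such index is $\leq prev$ (the largest element of $S$ to the left of $i_{\ell^*, m^*}$ at the time of processing), while in case (B) the additional sub-blocks descending from $B_{\ell^*, m^*}$ have indices $< k_{\ell^*, m^*}$. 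Therefore $s = prev$ in case (A), and $s = k_{\ell^*, m^*}$ in case (B).

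Finally I would discharge the inequality $\mu[s] \geq \mu[t] - \eps$ using event $E$. Case (B) is immediate: $\mu[k_{\ell^*,m^*}] \geq \mu^*_{\ell^*, m^*} - \eps/12 \geq \mu[t] - \eps/12 > \mu[t] - \eps$ by \eqref{eq:goodident} applied to block $B_{\ell^*, m^*}$, which contains $t$. For case (A), the deactivation rule gives $\hat\mu[k_{\ell^*, m^*}] \leq L_{\ell^*, m^*} + \eps/4 = \hat\mu[prev] + 3\eps/4$; combining with two applications of \eqref{eq:goodmsmnt} (for $prev$ and for $k_{\ell^*, m^*}$) and one application of \eqref{eq:goodident} to block $B_{\ell^*, m^*}$ yields
\[
\mu[prev] \;\geq\; \hat\mu[prev] - \eps/12 \;\geq\; \hat\mu[k_{\ell^*, m^*}] - 3\eps/4 - \eps/12 \;\geq\; \mu[t] - \eps/12 - \eps/12 - 3\eps/4 - \eps/12 \;=\; \mu[t] - \eps,
\]
as desired.

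The main obstacle I anticipate is case (A)'s bookkeeping: one must be careful to show that $prev$, defined as the largest element of $S$ to the left of $i_{\ell^*, m^*}$ \emph{at the moment of processing}, really equals the largest element of $S$ below $t$ \emph{at termination}. The argument above hinges on the fact that a deactivated block sends no arms to future levels and that surviving sub-blocks from earlier siblings only generate indices $\leq prev$; once this is spelled out, the arithmetic with the $\eps/12$, $\eps/4$, and $\eps/2$ terms closes cleanly.
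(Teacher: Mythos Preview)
Your proposal is correct and follows essentially the same approach as the paper: the paper's Case 1 is your Case (B) and its Case 2 is your Case (A), with the same identification of $s$ (as $k_{\ell^*,m^*}$ and $prev$ respectively) and the same arithmetic via \eqref{eq:goodident} and \eqref{eq:goodmsmnt}. Your anticipated obstacle --- showing that the $prev$ recorded at round $(\ell^*,m^*)$ coincides with the final $s$ --- is exactly the paper's step (2a), and your sketch of it (sub-blocks of earlier kept siblings have indices $\le k_{\ell^*,m_0}-1<prev$) matches the paper's reasoning.
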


\begin{proof}
Because $t\notin S$, the arm $A[t]$ was deactivated at some level $\ell$ in \Cref{algo:main}\ when it was a member of $m$-th block of that level; let $B_{\ell,m}$ denote the block. Let $k_{\ell,m}$ denote the index identified by \findbest.  There are two ways that $A[t]$ could have been deactivated.

\medskip

\textbf{Case 1:} $A[t]$ was deactivated when $k_{\ell,m}$ was added to $S$.  In this case, \Cref{algo:main} deactivates \emph{only} arms to the right of $k_{\ell,m}$ in $B_{\ell,m}$ (see line 24). Because $A[t]$ was deactivated, this means $k_{\ell,m}<t$.  We also know \Cref{algo:main} deactivates \emph{all} arms to the right of $k_{\ell,m}$ in $B_{\ell,m}$: no indices between $k_{\ell,m}$ and $t$ will ever be added to $S$. Ergo, $k_{\ell,m}$ is the largest member of $S$ to the left of $t$.  From \Cref{thm:findbest} and $E$, $k_{\ell,m}$ is an $\eps/12$ best arm of $B_{\ell,m}$. Thus $\mu[k_{\ell,m}] \geq \mu[t]-\eps/12$.  In sum, $k_{\ell,m}$ is the largest member of $S$ to the left of $t$ and $\mu[k_{\ell,m}] \geq \mu[t]-\eps/12$. \Cref{lem:nofalseneg} holds in this case.

\medskip

\textbf{Case 2:} $A[t]$ was deactivated along with all other members of $B_{\ell,m}$. In this case, we shall first argue \textbf{(2a)} that $prev$, defined in \Cref{algo:main} as the largest $s\in S$ such that $s<i_{\ell,m}$ \emph{at level $\ell$}, is the largest $s\in S$ such that $s<t$ \emph{at the end of the algorithm}. Then we shall argue \textbf{(2b)} that the inequality $\mu[prev] \geq \mu[t]-\eps$ holds.

\textbf{(2a).} Because the entirety of $B_{\ell,m}$ is deactivated, no indices between $i_{\ell,m}$ and $t$ will ever be added to $S$. Ergo, the largest $s\in S$ such that $s<i_{\ell,m}$ (at the end of \Cref{algo:main}) must be the same as the largest $s\in S$ such that $s<t$ (at the end of \Cref{algo:main}).

By definition, no index between $prev$ and $i_{\ell,m}$ is in $S$ at level $\ell$. Observe that all arms between $A[prev]$ and $A[i_{\ell,m}]$ are deactivated by the end of level $\ell$: were it not so, some block would have contributed an arm index to $S$. Ergo, no indices between $prev$ and $i_{\ell,m}$ will ever be added to $S$.

\textbf{(2b).} Having shown that $prev$ is the largest member of $S$ to the left of $t$ when \Cref{algo:main} ends, now we show that $\mu[prev]\geq \mu[t]-\eps$. Because $k_{\ell,m}\notin S$, we have
\begin{align*}
L_{\ell,m} +\eps/4=\hm[prev]+3\eps/4 &> \hm[k_{\ell,m}]
\shortintertext{otherwise we would have added the index by line 18. By assuming $E$, we can use the fact that measurements are accurate \eqref{eq:goodmsmnt}:}
\mu[prev] &> \mu[k_{\ell,m}]-11\eps/12 \stepcounter{equation}\tag{\theequation}\label{eq:stepdown}
\shortintertext{By assuming $E$, we can use \eqref{eq:goodident}, the fact that $A[k_{\ell,m}]$ has mean close to the best arm of $B_{\ell,m}$, which has mean $\mu^*_{\ell,m}$:}
\mu[prev] &> \mu^*_{\ell,m} - \eps
\end{align*}
By definition, $\mu^*_{\ell,m}$ is the maximum mean in $B_{\ell,m}$, so $\mu[prev] > \mu_t-\eps$.  In sum, $prev$ is the largest member of $S$ to the left of $t$ and $\mu[prev]\geq \mu[t]-\eps$. \Cref{lem:nofalseneg} holds in this case.

\medskip
Combining the two cases completes the proof of \Cref{lem:nofalseneg}.
\end{proof}

Now we shall prove the second part of \Cref{def:runningmax}---that the members of $S$ are approximate best arms for their prefixes.
\begin{lem}
\label{lem:nofalsepos}
If $E$ occurs, then any $s\in S$ is the index to an $\eps$-best arm of $\{A[0]\ldots A[s]\}$
\end{lem}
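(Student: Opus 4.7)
The plan is induction on the order in which elements are added to $S$, using Lemma~\ref{lem:nofalseneg} as a black box for the arms lying strictly between consecutive skyline members. The base case $s=0$ is trivial, since $\{A[0]\}$ contains only one arm. For the inductive step, suppose $s = k_{\ell,m}$ was added to $S$ during round $(\ell,m)$, and let $prev$ denote the value of the variable $prev$ at that round, i.e.\ the largest member of $S$ lying to the left of $i_{\ell,m}$ at that point; because the blocks at level $\ell$ are processed left-to-right and no index in $(prev, i_{\ell,m})$ is ever added to $S$, this is also the largest element of $S$ strictly less than $s$ at the end of the algorithm. By the inductive hypothesis, $prev$ is an $\eps$-best arm of $\{A[0],\ldots,A[prev]\}$.

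The key quantitative observation is that because $s$ was added (and not discarded by the test on line~18), we have $\hat\mu[k_{\ell,m}] \geq L_{\ell,m} + \eps/4 = \hat\mu[prev] + 3\eps/4$. Conditioning on $E$ and applying~\eqref{eq:goodmsmnt} to both measurements, this yields
\begin{equation*}
\mu[s] \;>\; \mu[prev] + \tfrac{3\eps}{4} - \tfrac{2\eps}{12} \;=\; \mu[prev] + \tfrac{7\eps}{12} \;>\; \mu[prev].
\end{equation*}
Thus adding an element to $S$ strictly increases the true mean over the previous skyline member.

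I will then split the indices $t \leq s$ into three ranges. First, for $t \leq prev$, the inductive hypothesis gives $\mu[prev] \geq \mu[t]-\eps$, and combining with $\mu[s] > \mu[prev]$ yields $\mu[s] \geq \mu[t]-\eps$. Second, for $prev < t < i_{\ell,m}$, such $t$ are not in $S$ and the largest element of $S$ to their left is $prev$, so Lemma~\ref{lem:nofalseneg} gives $\mu[prev] \geq \mu[t]-\eps$, and again $\mu[s] \geq \mu[t]-\eps$. Third, for $i_{\ell,m} \leq t \leq s$, $t$ lies in the block $B_{\ell,m}$; by \eqref{eq:goodident} the arm $k_{\ell,m}$ is $\eps/12$-best in $B_{\ell,m}$, so $\mu[s] \geq \mu^*_{\ell,m} - \eps/12 \geq \mu[t] - \eps/12 \geq \mu[t]-\eps$.

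The only nontrivial obstacle is verifying that $prev$ at round $(\ell,m)$ really is the final predecessor of $s$ in $S$, i.e., that no element of $S$ is ever inserted into the interval $(prev,s)$ in later rounds or levels. This follows because level-$\ell$ processing is left-to-right and, by the descent rule in lines~24--26, any sub-block produced at higher levels from $B_{\ell,m}$ is contained in $[i_{\ell,m}, k_{\ell,m}-1]$, so everything added subsequently to $S$ is either $\geq s$ or lies in a strictly earlier block of level $\ell$ that was already handled. Once this combinatorial bookkeeping is in place, the three-case split above completes the proof of Lemma~\ref{lem:nofalsepos}.
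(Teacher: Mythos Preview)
Your approach is essentially the paper's: induction over $S$, the key inequality $\mu[s] > \mu[prev] + 7\eps/12$ from the negation of the test on line~18, a decomposition of $[0,s]$ into ranges, and Lemma~\ref{lem:nofalseneg} for the arms strictly between $prev$ and the current block. The paper inducts on the left-to-right position in $S$ and splits into four subintervals (treating $B_{prev}$ separately), while you induct on insertion order and use three ranges; both work, and your three-range version is in fact a little cleaner since Lemma~\ref{lem:nofalseneg} already covers the right half of $B_{prev}$.

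One correction, though: your assertion that $prev$ is ``the largest element of $S$ strictly less than $s$ at the end of the algorithm'' is false in general. Sub-blocks of $B_{\ell,m}$ lie in $[i_{\ell,m}, k_{\ell,m}-1]$ and can contribute indices to $S$ at later levels, so the final predecessor of $s=k_{\ell,m}$ may well lie strictly between $i_{\ell,m}$ and $s$. Fortunately your case analysis never uses this claim. What you actually need---and what your bookkeeping argument in the last paragraph does support, once you trace it through---is only that no index in $(prev, i_{\ell,m})$ ever enters $S$, so that for each $t$ in your second range the final predecessor of $t$ in $S$ is $prev$ and Lemma~\ref{lem:nofalseneg} applies. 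Replace ``final predecessor of $s$'' by ``final predecessor of any $t \in (prev, i_{\ell,m})$'' and the argument is clean.
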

\begin{proof}
Label the arms in $S$ by $S[1],S[2],\dots,$ sorted by their position in the original list of arms $A$, so that $S[1] < S[2] < \dots$.  Note that this ordering does not correspond to the order that the arms were added to $S$ in the execution of Algorithm \Cref{algo:main}. In keeping with the figures, we will visualize the arms in $A$ and the elements of $S$ from left to right where the lower-numbered arms are left and higher-numbered arms are right.   We prove the lemma by induction on the list of arms $S[1],S[2],\dots$ from left to right.

\medskip
The base case is $r=1$, where $S[1]=0$. Clearly, $A[0]$ is the best arm of the trivial prefix $\{0\}$.

\medskip
Now assume the inductive hypothesis that for all $r'\in[1,r-1]$, $S[r']$ is an index of an $\eps$-best arm of the prefix $\{0,1,\ldots S[r']\}$. We will prove that $S[r]$ is an index to an $\eps$-best arm of $\{0,1,\ldots S[r]\}$; because every addition to $S$ is an output of \findbest, $S[r]$ is an $\eps$-best arm $k_{\ell,m}$ from some $B_{\ell, m}$, and was added to $S$ from some $B_{\ell,m}$ because it did not satisfy the condition on line 18, 
$$
L_{\ell,m} +\eps/4= \hm[prev]+3\eps/4 \leq \hm[k_{\ell,m}]
$$
where $prev$ is the nearest member of $S$ to the left of $B_{\ell,m}$ in round $(\ell,m)$. Because we assumed $E$, all calls to \estmean\ succeeded; using \eqref{eq:goodmsmnt},
\begin{equation}
\mu[prev]+7\eps/12 \leq \mu[k_{\ell,m}] \stepcounter{equation} \label{eq:stepup}
\end{equation}
We will use the above to show that $k_{\ell,m}$ indexes an $\eps$-best arm of $A[0,1,\ldots k_{\ell,m}]$. To do so, we decompose the interval into subintervals and prove that $A[k_{\ell,m}]$ is $\eps$-better than all arms in each. They are labeled as intervals (a,b,c,d) in \Cref{proofpicture}.

Let $B_{\ell,m}$ be the block which contained $k_{\ell,m}$ when $k_{\ell,m}$ was added to $S$. It spans $i_{\ell,m}$ and $j_{\ell,m}$. Let $B_{prev}$ be the block which contained $prev$ when $prev$ was added to $S$. It spans $i_{prev}$ and $j_{prev}$.  The subintervals we shall analyze are $[0,prev],[i_{prev},j_{prev}],(j_{prev},i_{\ell,m}),[i_{\ell,m},k_{\ell,m}]$; in \Cref{proofpicture}, they are labeled a-d. Note that they are not all disjoint but this does not affect the conclusion.  For each labeled interval, the corresponding entry in the list below proves that $A[k_{\ell,m}]$ is $\eps$-better than all arms in the interval.

\begin{figure}
\begin{center}
\includegraphics[width=0.75\textwidth]{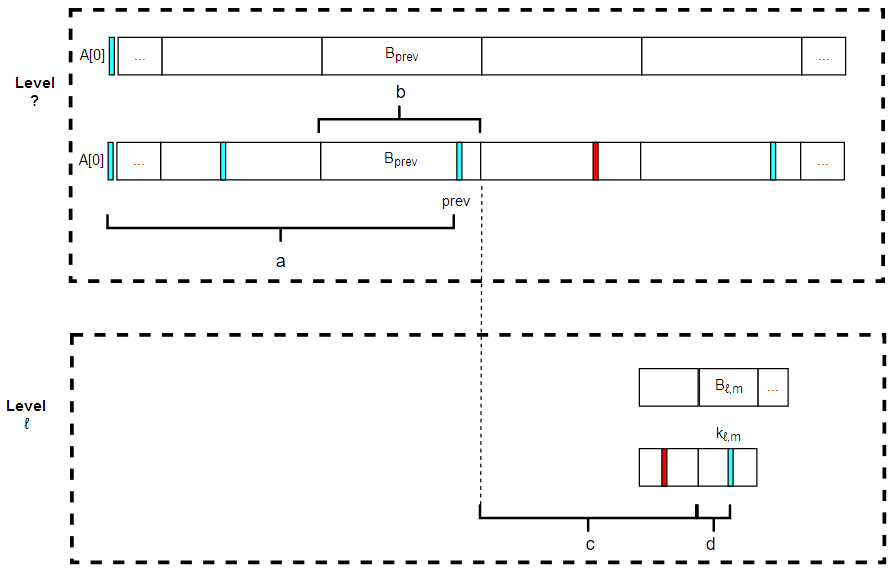}
\caption{Decomposition of the prefix $[0,k_{\ell,m}]$ into subintervals in terms of $prev$, the closest member of $S$ to the left.}
\label{proofpicture}
\end{center}
\end{figure}

\begin{enumerate}
\item[(a)] By inductive assumption, $A[prev]$ is an $\eps$-best arm of $[0,prev]$. By \eqref{eq:stepup}, the mean of $A[k_{\ell,m}]$ can only be larger..

\item[(b)] Because every member of $S$ is an $\eps$-best arm of the block that last contained it, $A[prev]$ is an $\eps$-best arm of $B_{prev}=[i_{prev},j_{prev}]$. By \eqref{eq:stepup}, the mean of $A[k_{\ell,m}]$ can only be larger.

\item[(c)] Here we show that $A[k_{\ell,m}]$ is $\eps$-better than all of $(j_{prev},i_{\ell,m})$. When $k_{\ell,m}$ was added to $S$, no arm in that interval was in $S$ because $prev$ is defined to be the closest member of $S$ to the left. These arms are deactivated; by \Cref{lem:nofalseneg}, $A[prev]$ is $\eps$-better than all of them. By \eqref{eq:stepup}, the mean of $A[k_{\ell,m}]$ can only be larger.

\item[(d)] $A[k_{\ell,m}]$ is an $\eps$-best arm of $B_{\ell,m}$ which implies it is an $\eps$-best arm of the subset $[i_{\ell,m},k_{\ell,m}]$.
\end{enumerate}

Because the union of the above intervals is $[0,k_{\ell,m}]$, $A[k_{\ell,m}]$ is an $\eps$-best arm of that prefix.  Thus, $S[r]=k_{\ell,m}$ indexes an $\eps$-best arm of $[0,S[r]]$.  The lemma now follows by induction.
\end{proof}

\subsubsection{Efficiency of \Cref{algo:main} (\Cref{clm:complexity})}
\label{sec:complexityproof}

We now turn to showing that, if~\eqref{eq:goodident} and \eqref{eq:goodmsmnt} hold, then \Cref{algo:main} draws $O(\frac{n}{\eps^2} \log \frac{1}{\eps \delta})$ samples.  First we will outline the proof while introducing some technical lemmas, then we will conclude by proving those lemmas.

\begin{proof}[Proof of \Cref{clm:complexity}]
Observe that \Cref{algo:main} only draws samples via the subroutines \estmean\ and \findbest. Measuring $A[0]$ on Line 1 of \Cref{algo:main} is a call to \estmean with error-parameters $(\eps/12,\delta/2)$, so it requires just $O\left(\frac{1}{\eps^2}\log\frac{1}{\delta}\right)$ samples by \Cref{lem:estmean}.  It remains to show that the total number of samples taken by \estmean\ and \findbest\ in the main loop is bounded by $O(\frac{n}{\eps^2}\log \frac{1}{\eps\delta})$.

Recall $n_{\ell,m}$ is the number of arms in block $B_{\ell,m}$ and $n_\ell$ is the the number arms in level $\ell$.  Let $C_\ell$ denote the sample complexity at level $\ell$ and let $C_{\ell,m}$ denote the sample complexity incurred by block $B_{\ell,m}$ so that
\[
C_\ell = \sum_{m=1}^{b_\ell} C_{\ell,m}
\]

When considering block $(\ell,m)$, \estmean\ and \findbest\ are called with parameters $(\eps/12,\delta_{\ell,m}/2)$. By \Cref{lem:estmean} and \Cref{thm:findbest}, the subroutines together take $O((n_{\ell,m}/\eps^2)\log (1/\delta_{\ell,m}))$ samples.  Recalling that $\delta_{\ell,m} = \delta_{\ell} / b_{\ell}$ where $\delta_{\ell} = \delta / 2^{\ell+1}$ and where $b_{\ell}$ is the number of blocks in the level, we can write

\begin{align*}
C_\ell 
&={} O\left(\sum_{m=1}^{b_\ell}\frac{n_{\ell,m}}{\eps^2}\log\frac{1}{\delta_{\ell,m}} \right) = O\left(\frac{1}{\eps^2} \sum_{m=1}^{b_\ell} n_{\ell,m} \log\frac{1}{\delta_{\ell,m}} \right) \\
&={} O\left(\frac{1}{\eps^2}\log\frac{b_\ell}{\delta_\ell}\sum_{m=1}^{b_\ell}n_{\ell,m} \right) = O\left(\frac{n_\ell}{\eps^2}\log\frac{b_\ell}{\delta_\ell}\right) = O\left(\frac{n_\ell}{\eps^2} \left(\log\frac{b_\ell}{\delta}+\log 2^{\ell+1} \right) \right)\\ 
&={} O\left(\frac{n_\ell}{\eps^2} \left( \ell +\log\frac{b_\ell}{\delta} \right)\right)
\end{align*}

Note that $b_{\ell}$ is increasing with $\ell$.  To finish the analysis, we will show that $n_{\ell}$ shrinks exponentially with $\ell$ sufficient speed that the overall sample complexity at each level is also decaying exponentially.

\medskip
First, we show that the number of blocks $b_{\ell}$ can increase at most exponentially in $\ell$.
\begin{lem}\label{lem:boundblocks}
If $E$ occurs, then at every level $\ell$, there are $b_\ell = O((\frac{10}{3})^\ell \cdot \frac{1}{\eps})$ blocks
\end{lem}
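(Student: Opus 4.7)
The plan is to prove the bound by induction on $\ell$, using $b_1=1$ as the base case (which satisfies $O((10/3)^\ell/\eps)$ for any $\eps\leq 1$ and suitable hidden constant). For the inductive step, every block at level $\ell+1$ arises as a sub-block carved out of some \emph{active} block at level $\ell$, so
\[
b_{\ell+1} \;\leq\; \sum_{m\,:\,B_{\ell,m}\text{ active}} (\text{number of sub-blocks spawned by } B_{\ell,m}).
\]
Line~25 of \Cref{algo:main} partitions $[i_{\ell,m},k_{\ell,m}-1]$ into pieces of size $\max(1,\lfloor(k_{\ell,m}-i_{\ell,m})/b_{\ell,m}\rfloor)$, which a short case analysis shows produces at most $2b_{\ell,m}+1$ sub-blocks; since $b_{\ell,m} \geq 5/3$ (recall $U_{\ell,m}-L_{\ell,m}\geq 5\eps/12$ whenever the block is active), this is at most $3b_{\ell,m}$. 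Hence I would reduce the task to bounding $\sum_{\text{active}} b_{\ell,m} = (4/\eps)\sum_{\text{active}}(U_{\ell,m}-L_{\ell,m})$.

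The key is then to control $\sum_{\text{active}}(U_{\ell,m}-L_{\ell,m}) = \sum_i(\hm[k_{\ell,m_i}]-\hm[prev_i]) - a_\ell\eps/3$ where $a_\ell$ is the number of active blocks at level $\ell$ and the index $i$ runs over them in left-to-right order. I would exploit the structure of the $prev$-pointers: whenever two consecutive active blocks lie inside the same level-$(\ell-1)$ active region, the $prev$ of the second equals the $k$ of the first, so the sum telescopes within each such ``chain.'' A chain can only break at a boundary between different level-$(\ell-1)$ active regions, since no element of $S$ from an earlier level can lie strictly inside a single level-$(\ell-1)$ active region (the corresponding arms have been carved away from that region already). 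Consequently the number of chains at level $\ell$ is at most $a_{\ell-1}+1$, and each chain contributes at most $\hm[\text{last}]-\hm[\text{initial }prev] \leq 1+\eps/12$ to the telescoped sum, giving an estimate of the form $\sum_{\text{active}}(U_{\ell,m}-L_{\ell,m}) = O(a_{\ell-1}) = O(b_{\ell-1})$. Plugging this into the previous display yields a recurrence roughly of the form $b_{\ell+1} \leq \alpha b_\ell + \beta b_{\ell-1}/\eps$, whose characteristic equation, once the algorithm-specific constants ($L=\hm[prev]+\eps/2$, threshold $L+\eps/4$, $U=\hm[k]+\eps/6$, and the factor $4/\eps$ in $b_{\ell,m}$) are substituted, has root $10/3$, delivering the claimed bound $b_\ell = O((10/3)^\ell/\eps)$ upon unrolling.

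The main obstacle I expect is extracting the precise base $10/3$ rather than a larger constant. The crude chain-telescoping step above alone gives a recurrence like $b_{\ell+1}=O(b_{\ell-1}/\eps)$, which grows as $(C/\eps)^{\ell/2}$ and is too weak when $\eps$ is small. Tightening it requires combining the per-chain bound $\sum(U-L)\leq 1$ with the complementary observation that within a single chain successive $k$'s satisfy $\hm[k_i]-\hm[k_{i-1}] > 3\eps/4$ (so each chain has length at most $O(1/\eps)$), and using the $-a_\ell\eps/3$ correction in $\sum(U_{\ell,m}-L_{\ell,m})$ as a budget that penalizes levels with many active blocks. Balancing these two estimates against the sub-block count $3b_{\ell,m}$ is the delicate step, and is where the constant $10/3$ should emerge from the algorithm's specific choices.
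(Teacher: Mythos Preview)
Your setup is right—induct on $\ell$, bound $b_{\ell+1}$ by summing $b_{\ell,m}$ over the active blocks, and expand $\sum b_{\ell,m}=(4/\eps)\sum(\hm[k_{\ell,m}]-\hm[prev_{\ell,m}]-\eps/3)$—but the chain decomposition you propose is the wrong tool for controlling $\sum(\hm[k_{\ell,m}]-\hm[prev_{\ell,m}])$, and the patch you sketch does not close the gap. Breaking the sum into chains bounded in number by $a_{\ell-1}$ gives at best $\sum(U-L)\leq a_{\ell-1}$, hence $b_{\ell+1}=O(b_{\ell-1}/\eps)$; neither the per-chain length bound $O(1/\eps)$ nor the $-a_\ell\eps/3$ correction changes the dependence on $1/\eps$ in that recurrence, and no amount of ``balancing'' will convert a growth rate of order $(1/\eps)^{\ell/2}$ into $(10/3)^\ell/\eps$. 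The constant $10/3$ does not fall out of the arithmetic you describe.

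The idea you are missing is that the telescoping works \emph{globally}, across chain boundaries, once you invoke \Cref{lem:nofalsepos}. For any two consecutive active blocks $B_{\ell,m'}$ and $B_{\ell,m}$ at level~$\ell$ (with nothing active between them, regardless of which level-$(\ell-1)$ parent they come from), the index $prev_{\ell,m}$ satisfies $prev_{\ell,m}\geq k_{\ell,m'}$, and since $prev_{\ell,m}\in S$, \Cref{lem:nofalsepos} gives $\mu[prev_{\ell,m}]\geq \mu[k_{\ell,m'}]-\eps$. Combining with \eqref{eq:goodmsmnt} yields $\hm[prev_{\ell,m}]>\hm[k_{\ell,m'}]-7\eps/6$, so each summand satisfies
\[
\hm[k_{\ell,m}]-\hm[prev_{\ell,m}] \;<\; \hm[k_{\ell,m}]-\hm[k_{\ell,m'}]+\tfrac{7\eps}{6}.
\]
Now the differences $\hm[k_{\ell,m}]-\hm[k_{\ell,m'}]$ telescope over \emph{all} active blocks at level~$\ell$ to at most $1$, and you obtain directly
\[
b_{\ell+1}\;\leq\;\frac{4}{\eps}\Bigl(\tfrac{5\eps}{6}\,b_\ell+1\Bigr)\;=\;\frac{10}{3}\,b_\ell+\frac{4}{\eps},
\]
which unwinds (with $b_1=1$) to $b_\ell=O((10/3)^\ell/\eps)$. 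The whole point is that \Cref{lem:nofalsepos} lets you compare $prev_{\ell,m}$ to $k_{\ell,m'}$ with only an additive $O(\eps)$ loss, so there is no $1/\eps$ factor attached to the number of ``chain breaks''; the only $1/\eps$ comes from the single global telescoped term, and it enters additively rather than multiplicatively in the recurrence.
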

\noindent The proof will be presented shortly. From this lemma, we have 
\[
C_\ell = O\left(\frac{n_\ell}{\eps^2} \left( \ell +\log\frac{1}{\eps\delta} \right) \right) ~~~~~ (*)
\]
Let $\tau$ denote the number of levels $\ell$ considered by the algorithm before terminating, let $C$ denote the sample complexity of the main loop. Naturally,
\begin{align*}
C &= \sum_{\ell=1}^{\tau} C_\ell = O\left( \frac{1}{\eps^2} \sum_{\ell=1}^{\tau} n_\ell \left( \ell +\log\frac{1}{\eps\delta} \right) \right)\\
&= O\left( \frac{1}{\eps^2} \sum_{\ell=1}^{\infty} n_\ell \left( \ell + \log\frac{1}{\eps\delta} \right) \right)~~~~~(**)
\end{align*}
where we use $(*)$ to expand $C_\ell$.

\medskip
Finally, we show that the number of arms $n_{\ell}$ decreases exponentially in $\ell$
\begin{lem} \label{lem:boundarms}
If $E$ occurs, the number of active arms in level $\ell$ is $n_{\ell} \leq \half \cdot n_{\ell-2}$ for all $\ell>2$.
\end{lem}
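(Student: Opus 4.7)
The plan is a per-parent halving argument: for each blue parent $B_{\ell-2,m}$ at level $\ell-2$, I will show that the arms which ultimately reach level $\ell$ from its descendants are at most half of the arms $K := k_{\ell-2,m}-i_{\ell-2,m}$ it sends to level $\ell-1$. Summing over blue parents at level $\ell-2$ (red ones contribute nothing to either $n_{\ell-1}$ or $n_\ell$) then yields $n_\ell \leq \tfrac{1}{2} n_{\ell-1} \leq \tfrac{1}{2} n_{\ell-2}$.

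Fix such a parent and set $b := b_{\ell-2,m}$, so it creates sub-blocks at level $\ell-1$ each of size at most $\lfloor K/b \rfloor$. Writing $p_0 := prev(\ell-2,m)$, the definition of $b$ on line 25 combined with the blue condition on the parent gives the identity $\hm[k_{\ell-2,m}] - \hm[p_0] = \eps b/4 + \eps/3$, so in particular $b \geq 5/3$. The key structural observation I will establish is that, within this parent, the $prev$ value at every blue sub-block at level $\ell-1$ is either $p_0$ (for the first blue sub-block in processing order) or $k_{\ell-1,m'}$ for the immediately preceding blue sub-block of the same parent. This holds because (i) the parent's own index $k_{\ell-2,m}$ is at least as large as every sub-block start of $B_{\ell-2,m}$ and so never qualifies as $prev$, and (ii) no $S$-addition originating outside of $B_{\ell-2,m}$ has index strictly between $p_0$ and $i_{\ell-2,m}$, so no such element can interject between sub-blocks of this parent.

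Listing the blue sub-blocks as $B_{\ell-1,m_1'},\ldots,B_{\ell-1,m_t'}$ in processing order, iterating the blue threshold $\hm[k] \geq \hm[prev] + 3\eps/4$ along the chain yields $\hm[k_{\ell-1,m_t'}] \geq \hm[p_0] + 3t\eps/4$. On the other hand, since $k_{\ell-1,m_t'}$ is an $\eps/12$-best arm of a sub-block contained in $B_{\ell-2,m}$, \eqref{eq:goodident} and \eqref{eq:goodmsmnt} give $\hm[k_{\ell-1,m_t'}] \leq \mu^*_{\ell-2,m} + \eps/12 \leq \hm[k_{\ell-2,m}] + \eps/4$. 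Combining both bounds with the identity for $b$ yields $t < b/3 + 7/9$. Since each blue sub-block passes at most $\lfloor K/b \rfloor$ arms on to level $\ell$, the parent's total contribution to $n_\ell$ is at most $t \lfloor K/b \rfloor$, which I claim is $\leq K/2$.

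The main obstacle I anticipate is the final inequality $t \lfloor K/b \rfloor \leq K/2$. For $b \geq 14/3$ the continuous bound $t < b/3 + 7/9$ is already at most $b/2$, so the inequality follows immediately from $t \cdot \lfloor K/b \rfloor \leq t \cdot (K/b) \leq K/2$. For the harder small-$b$ regime $5/3 \leq b < 14/3$, the continuous bound on $t$ may slightly exceed $b/2$, and I will need a short case analysis (splitting by the integer value of $\lfloor b/3 + 7/9\rfloor \in \{1,2\}$) that leverages integrality of $t$ together with the floor on the sub-block size to recover the bound.
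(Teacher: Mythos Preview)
Your per-parent halving strategy is exactly the route the paper takes. The paper factors it through two auxiliary lemmas: \Cref{lem:boundcont} bounds the number $t$ of blue children of a blue parent by $\tfrac{2}{\eps}(U_{\ell,m}-L_{\ell,m}) = b_{\ell,m}/2$, using the same $prev$-chain structure you identify but carried out in the \emph{true} means (each blue child's $\mu$ lies in $(L_{\ell,m},U_{\ell,m}]$ and exceeds the previous blue child's by more than $\eps/2$); then \Cref{lem:boundarm-helper} concludes that since at most half of the (approximately equal-sized) children are blue, at most half of the parent's arms reach the next level, with a separate case for $K < b_{\ell,m}$ where sub-blocks are singletons and nothing survives. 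Your $\hat\mu$-space chain with step $3\eps/4$ is a legitimate variant of this and yields the numerically different bound $t \leq b/3 + 7/9$.

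Where your plan does not close is the small-$b$ case analysis. For $5/3 \leq b < 2$ your bound still permits $t = 1$, and a single blue sub-block has size $\lfloor K/b \rfloor$, which exceeds $K/2$ whenever $b < 2$ (for instance $b = 5/3$ and large $K$ gives $\lfloor 3K/5 \rfloor \approx 0.6K$); so the inequality $t\lfloor K/b \rfloor \leq K/2$ is simply false there, and neither integrality of $t$ nor the floor on sub-block size rescues it. The paper's $\mu$-space bound $t \leq b_{\ell,m}/2$ sidesteps this regime altogether, since $b < 2$ then forces $t = 0$. If you want to keep your $\hat\mu$-chain, you need to rework the endpoints (or switch to $\mu$ via \eqref{eq:stepup} together with \eqref{eq:goodident} and \eqref{eq:goodmsmnt}) so that the resulting bound on $t$ is at most $b/2$ rather than $b/3 + 7/9$; then the case analysis becomes unnecessary.
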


The proof will be presented shortly. From this lemma, we have
\begin{align*}
C = (**) &= O\left(\frac{n}{\eps^2} \cdot \sum_{\ell=1}^{\infty} \left(\half\right)^\ell \left( \ell + \log\frac{1}{\eps\delta} \right) \right)\\
&= O\left(\frac{n}{\eps^2}\log\frac{1}{\eps\delta}\right)
\end{align*}
\end{proof}

Now that we have outlined the structure of the proof, we complete the analysis by proving \Cref{lem:boundblocks} and \Cref{lem:boundarms}.

\begin{proof}[Proof of \Cref{lem:boundblocks}]
Let $P_\ell$ denote the set of blocks in level $\ell$ that are ``parents'' of ``child'' blocks, by which we mean those blocks of $\ell$ that are divided into sublocks for $\ell+1$.  Naturally, $P_\ell \subseteq B_\ell$ and thus $|P_\ell|\leq |B_\ell| = b_\ell$.

If $B_{\ell,m} \in P_\ell$, then line 24 bounds the number of children by $$b_{\ell,m}:=\frac{4}{\eps}(U_{\ell,m}-L_{\ell,m})=(\hm[k_{\ell,m}]-\hm[prev]-\eps/3)$$ where $prev$ denotes the largest member of $S$ smaller than $k_{\ell,m}$ at round $(\ell,m)$.  Thus the number of blocks in $\ell+1$ is bounded by
\[
b_{\ell+1} \leq \sum_{B_{\ell,m}\in P_\ell} b_{\ell,m} = \frac{4}{\eps} \left( -\frac{\eps}{3} |P_\ell|+ \sum_{B_{\ell,m}\in P_\ell} \hm[k_{\ell,m}]-\hm[prev_{\ell,m}] \right) ~~~ (***)
\]
where we insert subscript to $prev$ to disambiguate between rounds. 

We will upper bound the term in the summation by lower bounding $\hm[prev_{\ell,m}]$.  Consider two members of $P_\ell$, $B_{\ell,m'}$ and $B_{\ell,m}$, such that $m'<m$ and there is no member of $P_\ell$ between them. By the manner in which $prev_{\ell,m}$ is defined, $k_{\ell,m'} \leq prev_{\ell,m}$.  Recall \Cref{lem:nofalsepos}, which states that $\forall s\in S\ A[s]$ is an $\eps$-best arm of the prefix $[s]$. This means $\mu[prev_{\ell,m}] > \mu[k_{\ell,m'}]-\eps$.  Recall that \estmean\ is given error parameter $\eps/12$. Because we assumed $E$, \eqref{eq:goodmsmnt} gives $\hm[prev_{\ell,m}] > \hm[k_{\ell,m'}]-7\eps/6$.  Therefore $\hm[k_{\ell,m}]-\hm[prev_{\ell,m}]<\hm[k_{\ell,m}]-\hm[k_{\ell,m'}]+7\eps/6$. This lets us upper bound $(***)$ by
\begin{align*}
b_{\ell+1} &\leq \frac{4}{\eps} \left( \frac{5 \eps}{6} |P_\ell| + \sum_{B_{\ell,m}\in P_\ell} \hm[k_{\ell,m}]-\hm[k_{\ell,m'}] \right)
\shortintertext{Measurements lie within $[0,1]$ so the summation telescopes nicely}
&< \frac{4}{\eps} \left( \frac{5\eps}{6}|P_\ell| + 1 \right) < \frac{10}{3} b_\ell+ \frac{4}{\eps}
\end{align*}

Since $b_1=1$, we can solve the recurrence to obtain
\begin{align*}
b_\ell &\leq \left(\frac{10}{3}\right)^\ell + \frac{4}{\eps} \sum_{\ell'=0}^{\ell-1} \left(\frac{10}{3}\right)^{\ell'}\\
&< \left(\frac{10}{3}\right)^\ell \left(1+ \frac{4}{\eps}\right) = O\left(\left(\frac{10}{3}\right)^\ell\cdot \frac{1}{\eps}\right)
\end{align*}
\end{proof}

Finally, we prove \Cref{lem:boundarms}, stating that the number of active arms in a level $\ell$ decays exponentially with $\ell$.  To do so we need two intermediate lemmas concerning the children of a given block. \Cref{lem:boundcont} states that, for every block $B_{\ell,m}$, there is a bound on the number of arms its children can ``contribute'' to $S$.  \Cref{lem:boundarm-helper} relates \Cref{lem:boundcont} to $b_{\ell,m}$, which is the number of children of $B_{\ell,m}$. Because each child contributes at most one arm to $S$ (specifically, an arm identified by \findbest), if the number of child blocks is twice the maximum number of contributions (which we ensure in the algorithm), at least half of the child blocks do not make a contribution and are deactivated.

Once we have this guarantee for every block in a level $\ell$, if child blocks are of roughly equal size (which we ensure in the algorithm), it follows that at least half of the active arms in level $\ell$ are deactivated before level $\ell+2$.

\begin{lem}\label{lem:boundcont}
Assuming $E$ occurs, if block $B_{\ell,m}$ contributes $k_{\ell,m}$ to $S$, the number of contributions made by its children is bounded by $$\frac{2}{\eps}(U_{\ell,m}-L_{\ell,m})= \frac{2}{\eps} (\hm[k_{\ell,m}]-\hm[prev]-\eps/3).$$
\end{lem}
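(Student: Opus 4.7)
The plan is a telescoping argument on the empirical means of the arms that child blocks of $B_{\ell,m}$ contribute to $S$. Let $c_1 < c_2 < \dots < c_p$ enumerate those contributions in left-to-right order; each $c_j = k_{\ell+1,m_j}$ for some child block $B_{\ell+1,m_j} \subseteq [i_{\ell,m}, k_{\ell,m}-1]$. My first step will be to identify the value of $prev_j$ used when processing each of these child blocks on line 15 of \Cref{algo:main}. For the very first contributing child, I will argue $prev_1 = prev$ (the $prev$ of the parent $B_{\ell,m}$), because all prior level-$\ell+1$ blocks come from parents different from $B_{\ell,m}$ and no $S$-element was added between $prev$ and $i_{\ell,m}$. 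For $j \geq 2$, I will argue $prev_j = c_{j-1}$: children are processed left to right, the only $S$-elements lying strictly between $c_{j-1}$ and $i_{\ell+1,m_j}$ would have to come from siblings, and those siblings did not contribute by assumption.

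Next I exploit the trigger on line 18: $c_j$ is added only if $\hm[c_j] > L_{\ell+1,m_j} + \eps/4 = \hm[prev_j] + 3\eps/4$. Substituting the identifications from the previous paragraph and telescoping over $j = 1, \dots, p$ gives
\[
\hm[c_p] > \hm[prev] + p \cdot \tfrac{3\eps}{4} = L_{\ell,m} - \tfrac{\eps}{2} + p \cdot \tfrac{3\eps}{4}.
\]

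For the upper bound, I use $c_p \in B_{\ell,m}$ to conclude $\mu[c_p] \leq \mu^*_{\ell,m}$. Under $E$, \eqref{eq:goodident} and \eqref{eq:goodmsmnt} applied to $B_{\ell,m}$ give $\mu^*_{\ell,m} < \mu[k_{\ell,m}] + \eps/12 < \hm[k_{\ell,m}] + \eps/6 = U_{\ell,m}$, and then \eqref{eq:goodmsmnt} applied to $c_p$ gives $\hm[c_p] < \mu[c_p] + \eps/12 < U_{\ell,m} + \eps/12$. Combining with the telescoped lower bound and rearranging yields a bound on $p$ of the form $p \leq (2/\eps)(U_{\ell,m} - L_{\ell,m})$.

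The main obstacle I expect is pinning down $prev_j$ precisely---the argument relies on the fact that the children tile $[i_{\ell,m}, k_{\ell,m}-1]$ contiguously and are processed together in left-to-right order at level $\ell+1$, so that no intervening $S$-additions can occur from outside this interval. A secondary issue is matching constants: the natural chain inequality $p \cdot (3\eps/4) \leq (U_{\ell,m} - L_{\ell,m}) + 7\eps/12$ gives a leading coefficient of $4/(3\eps)$, which is strictly better than the claimed $2/\eps$, so the stated bound absorbs the lower-order $\eps$ terms comfortably once the trigger $\hm[k_{\ell,m}] - \hm[prev] > 3\eps/4$ of the parent is invoked.
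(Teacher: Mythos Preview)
Your proposal is correct and follows essentially the same telescoping argument as the paper. The only cosmetic difference is that you chain the inequalities directly on the empirical means $\hm$ with step $3\eps/4$ from line~18, whereas the paper first converts to true means via \eqref{eq:goodmsmnt} and \eqref{eq:stepup}, obtaining a step of size $>\eps/2$ and sandwiching all child contributions' \emph{true} means in $(L_{\ell,m},U_{\ell,m}]$; both versions hinge on exactly the identification $prev_1 = prev$ and $prev_j = c_{j-1}$ that you spell out (the paper asserts ``by assumption, this is $prev$'' with less justification than you give). One caution: your closing claim that the additive $7\eps/12$ is ``comfortably absorbed'' by the gap between $4/(3\eps)$ and $2/\eps$ is too optimistic, since the parent trigger only guarantees $U_{\ell,m}-L_{\ell,m}\ge 5\eps/12$, so the slack $\tfrac{2}{3\eps}(U_{\ell,m}-L_{\ell,m})$ can be as small as $5/18<7/9$. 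The paper's own arithmetic has the same off-by-$O(1)$ looseness at this step; in both cases the bound holds up to an additive constant, which is all that \Cref{lem:boundarm-helper} actually needs.
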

\begin{proof}[Proof of Lemma~\ref{lem:boundcont}]
We will show that the mean of any arm in $B_{\ell,m}$ resides within an interval $(L_{\ell,m},U_{\ell,m})$ and that the $(r+1)$-st-from-the-left contribution to $S$ by the children implies a ``large step'' from the $r$-th. It follows that there can only be so many steps.

First, observe that the mean of the leftmost child's contribution is greater than $\hm[prev]+\eps/2=L_{\ell,m}$.  Specifically, consider $k_1$, the leftmost contribution to $S$ by a child block. For an addition to $S$ to occur, the inequality of line 18 is false: $\hm[k_1]$ is larger than the the measured mean of the closest arm to the left. By assumption, this is $prev$. \eqref{eq:stepup} tells us that $\mu[k_1] \geq \mu[prev] + 7\eps/12$. From \eqref{eq:goodmsmnt} we know $\mu[prev]\geq \hm[prev]-\eps/12$ so $\mu[k_1] \geq \hm[prev]+\eps/2$.

Second, observe that the mean of any child's contribution is at most  $\leq \hm[k_{\ell,m}]+\eps/6=U_{\ell,m}$.  Specifically, by assuming $E$, we have from \eqref{eq:goodmsmnt} that $\mu[k_{\ell,m}] \leq \hm[k_{\ell,m}]+\eps/12$ and from \eqref{eq:goodident} that $|\mu[k_{\ell,m}]-\mu^*_{\ell,m}|<\eps/12$. Combining these inequalities, we have $\mu^*_{\ell,m} \leq \mu[k_{\ell,m}]+\eps/12 \leq \hm[k_{\ell,m}]+\eps/6$.

From this, we can show that every contribution of level $\ell+1$ has mean $>\eps/2$ more than the previous such contribution.  Consider $B_{\ell+1,p}$, the child block of $B_{\ell,m}$ that makes the $r^{th}$ contribution in level $\ell+1$. Let $B_{\ell+1,q}$ denote the block that makes the $(r+1)^{th}$.  At the time of adding $k_{\ell+1,q}$ to $S$, $k_{\ell+1,p}$ is the closest member of $S$ to its left. Because $k_{\ell+1,q}$ was added to $S$, the inequality of line 18 is false: from \eqref{eq:stepup}, $\mu[k_{\ell+1,q}] > \mu[k_{\ell+1,p}] + \eps/2$ where $k_{\ell+1,p}$ stands in for $prev$.

Combining these three facts immediately implies the lemma.
\end{proof}

\begin{lem}\label{lem:boundarm-helper}
Assuming $E$ occurs, if $B_{\ell,m}$ contributes some $k_{\ell,m}$ to $S$ in level $\ell$, then at most half of the arms in $B_{\ell,m}$ are active in level $\ell+2$.
\end{lem}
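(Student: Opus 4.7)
The plan is to characterize which arms of $B_{\ell,m}$ can possibly be active in level $\ell+2$ and then count them via \Cref{lem:boundcont}. The first observation I would make is that, because line 24 of \Cref{algo:main} only subdivides the prefix $[i_{\ell,m}, k_{\ell,m}-1]$, the arms in $[k_{\ell,m}+1, j_{\ell,m}]$ together with $k_{\ell,m}$ itself never appear in any level-$(\ell+1)$ block; and among the children actually produced, any that fails the line-18 test is skipped via \textbf{continue} on line 20, so its arms are deactivated and cannot reach level $\ell+2$. Hence the arms of $B_{\ell,m}$ still active in level $\ell+2$ are exactly those lying within grandchildren of $B_{\ell,m}$, i.e., inside the children that themselves contribute an index to $S$.

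Next I would invoke \Cref{lem:boundcont} to cap the number of contributing children by $b_{\ell,m}/2$, and then bound the size of each child. By line 24, each child has size $s := \max\bigl(1,\lfloor n'/b_{\ell,m}\rfloor\bigr)$, where $n' := k_{\ell,m} - i_{\ell,m} \leq n_{\ell,m}$ is the length of the prefix being subdivided. In the main case $n' \geq b_{\ell,m}$, $s = \lfloor n'/b_{\ell,m}\rfloor$ so that $s \cdot b_{\ell,m} \leq n'$, and the total number of arms across the at-most $b_{\ell,m}/2$ contributing children is at most $(b_{\ell,m}/2)\cdot s \leq n'/2 \leq n_{\ell,m}/2$, as required. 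In the corner case $n' < b_{\ell,m}$, we have $s = 1$ and each child is a singleton $\{i\}$ whose own $k_C$ equals $i$; any such singleton that contributes triggers line 24 on the empty interval $[i, i-1]$, producing no grandchildren at all, so the level-$(\ell+2)$ population inherited from $B_{\ell,m}$ is $0$ and the bound holds trivially.

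The main obstacle is purely bookkeeping: one has to verify that no child exceeds $s$ arms under the line-24 partition and that the arms of a non-contributing child are truly absent from all future levels. Both points follow from the literal reading of lines 19--26 --- ``blocks of size $s$'' on line 24, and the \textbf{continue} on line 20 which bypasses the assignments on lines 22--26 --- so no new idea beyond \Cref{lem:boundcont} is needed to conclude.
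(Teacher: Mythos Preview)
Your argument is essentially the same as the paper's: both split on whether $k_{\ell,m}-i_{\ell,m} \geq b_{\ell,m}$, both use \Cref{lem:boundcont} to cap the number of contributing children at $b_{\ell,m}/2$, and both conclude by noting that children have (approximately) equal size $\lfloor n'/b_{\ell,m}\rfloor$ in the main case and are singletons with no grandchildren in the corner case. Your write-up is slightly more explicit about the bookkeeping (e.g., that the suffix $[k_{\ell,m},j_{\ell,m}]$ is already deactivated), but the structure and key steps are identical.
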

\begin{proof}[Proof of Lemma~\ref{lem:boundarm-helper}]
There are two cases, depending on the size of $B_{\ell,m}$ relative to $b_{\ell,m}$, an upper bound on the number of child blocks that will be made.

\medskip
\textbf{Case 1:} $k_{\ell,m}-i_{\ell,m} \geq b_{\ell,m}$.  Because there are enough arms, the algorithm breaks $B_{\ell,m}$ into exactly $b_{\ell,m}=(4/\eps)(U_{\ell,m}-L_{\ell,m})$ child blocks (on line 24).  \Cref{lem:boundcont} bounds the number of contributions in level $\ell+1$ by $(2/\eps)(U_{\ell,m}-L_{\ell,m})$. Thus the number of contributions is at most half of $b_{\ell,m}$.  If a block does not add an arm to $S$, all arms in that block are deactivated. Because at most half of the child blocks contribute to $S$ and the child blocks are of (approximately) equal size, at most half of the arms in $B_{\ell,m}$ are active after level $\ell+1$.

\medskip
\textbf{Case 2:} $k_{\ell,m}-i_{\ell,m} < b_{\ell,m}$.  Line 24 breaks $B_{\ell,m}$ into single-arm blocks. If one of those blocks does not contribute to $S$, then the block is deactivated. If it does contribute to $S$, line 24 cannot form child blocks.  Thus no arm in $B_{\ell,m}$ is active after level $\ell+1$.
\end{proof}

Given \Cref{lem:boundarm-helper}, it is clear that the number of active arms of a level decays exponentially, which is the assertion of \Cref{lem:boundarms}.

\subsubsection{Analysis of \Cref{algo:truncate}}
\begin{clm}
Let $S$ be the output of \Cref{algo:main}\ on input $(A,\eps,\delta)$. If $E$ occurs, the output of \Cref{algo:truncate} on $S$ is an $\eps$-skyline of size $O(1/\eps)$.
\end{clm}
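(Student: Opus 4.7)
I will separately verify the size bound and the two conditions of \Cref{def:runningmax}.

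\textbf{Size.} List the survivors of $S'$ in increasing order as $s_0 < s_1 < \cdots < s_K$. For each $i \geq 1$, when \Cref{algo:truncate} considered $s_i$ the variable $s$ held $s_{i-1}$ (the most recently accepted element), and the ``else'' branch was taken, yielding $\hat{\mu}[s_{i-1}] + 3\eps/4 \leq \hat{\mu}[s_i]$. Telescoping these $K$ inequalities and using $\hat{\mu}[\cdot] \in [0,1]$ gives $K + 1 \leq 4/(3\eps) + 1 = O(1/\eps)$.

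\textbf{Condition 2.} Since $S' \subseteq S$, \Cref{lem:nofalsepos} (applied to the output of \Cref{algo:main}) already guarantees that every $s \in S'$ indexes an $\eps$-best arm of $\{A[0], \dots, A[s]\}$; Condition 2 is inherited verbatim.

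\textbf{Condition 1.} Fix $t \notin S'$ and let $s$ be the largest element of $S'$ smaller than $t$; I need $\mu[s] \geq \mu[t] - \eps$. Split into two cases. If $t \in S \setminus S'$, then $t$ was removed by \Cref{algo:truncate} in an iteration whose anchor was precisely $s$ (all later anchors exceed $t$), so the triggering inequality $\hat{\mu}[s] + 3\eps/4 > \hat{\mu}[t]$ combined with the $\eps/12$ measurement accuracy under $E$ yields $\mu[s] > \mu[t] - 11\eps/12 \geq \mu[t] - \eps$. If $t \notin S$, \Cref{lem:nofalseneg} produces $s_\star \in S$, the largest predecessor of $t$ in $S$, with $\mu[s_\star] \geq \mu[t] - \eps$; if $s_\star \in S'$ then $s = s_\star$ and the bound is immediate, otherwise $s_\star$ was removed with anchor $s$ and I chain the previous bullet with \Cref{lem:nofalseneg}.

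\textbf{Main obstacle.} The chain in the final sub-case is the delicate step: a naive composition gives slack $11\eps/12 + \eps > \eps$. I would close the gap by splitting on which case of the proof of \Cref{lem:nofalseneg} produced $s_\star$. In Case~1 there, $s_\star$ is a fresh $\eps/12$-best arm of the block containing $t$, so the slack from $t$ down to $s_\star$ is only $\eps/12$ and the composed slack collapses to exactly $\eps/12 + 11\eps/12 = \eps$. In Case~2 ($s_\star = prev_{\ell,m}$ with the block containing $t$ deactivated), I would exploit the admission inequality $\hat{\mu}[s_\star] \geq \hat{\mu}[s''] + 3\eps/4$ that earned $s_\star$ its place in $S$ during some earlier round of \Cref{algo:main}, combined with the removal inequality $\hat{\mu}[s_\star] < \hat{\mu}[s] + 3\eps/4$, to force $\hat{\mu}[s] > \hat{\mu}[s'']$; careful bookkeeping of the constants $\eps/12$, $\eps/4$, $\eps/6$, and $3\eps/4$ hard-coded into the two algorithms, together with \Cref{lem:nofalsepos} applied to $s$, should then absorb the remaining slack.
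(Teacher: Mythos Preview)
Your size argument and your verification of Condition~2 match the paper's proof essentially verbatim: the paper also telescopes the $3\eps/4$ gaps in $\hat\mu$ (obtaining a $7\eps/12$ gap in the true means, hence at most $12/(7\eps)$ survivors) and notes that \Cref{lem:nofalsepos} is unaffected since \Cref{algo:truncate} adds no arms.  For Condition~1 the paper is brief: it shows only that each removed index $s'$ has its anchor $s$ satisfying $\mu[s]>\mu[s']-11\eps/12$ (the same algebra as \eqref{eq:stepdown}) and then simply asserts that ``\Cref{lem:nofalseneg} still holds.''  On the part the paper actually writes out, you agree with it.

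You go further than the paper by explicitly treating the case $t\notin S$, and your handling of the Case~1 sub-case (where $s_\star=k_{\ell,m}$ is a fresh \findbest\ output, so the slack from $t$ down to $s_\star$ is only $\eps/12$ and composes with the $11\eps/12$ truncation slack to exactly $\eps$) is correct.  The genuine gap is your Case~2 sub-case.  The ingredients you propose---the admission inequality $\hat\mu[s_\star]\ge\hat\mu[s'']+3\eps/4$ from \Cref{algo:main}, the removal inequality $\hat\mu[s_\star]<\hat\mu[s]+3\eps/4$ from \Cref{algo:truncate}, and \Cref{lem:nofalsepos} applied to $s$---only relate $\mu[s]$ to arms with index at most $s$, whereas $t>s$; they give you $\hat\mu[s]>\hat\mu[s'']$ but say nothing new about $\mu[t]$.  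Chaining what is actually available (the deactivation inequality for $B_{\ell,m}$, the $\eps/12$ accuracy of $k_{\ell,m}$ and of the estimates, and the truncation inequality for $s_\star$) yields only $\mu[s]>\mu[t]-21\eps/12$, not $\mu[s]\ge\mu[t]-\eps$.  ``Careful bookkeeping'' is not a substitute here: with the constants hard-wired into the two algorithms, no rearrangement of those four inequalities sums to at most $\eps$.  You would need either a genuinely different structural argument explaining why the anchor $s$ already $\eps$-dominates the deactivated block $B_{\ell,m}$, or to tighten the constants in \Cref{algo:main}/\Cref{algo:truncate} before this step goes through.
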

Note that \Cref{algo:truncate} is deterministic, and we only rely on the condition $E$ to establish that the input $S$ is an $\eps$-skyline and that the estimates in $M$ are accurate.

\begin{proof}
From \Cref{clm:correctness}, and the fact that $E$ occurs, we know that $S$ is an $\eps$-skyline of $A$.  

First, we shall see that the changes \Cref{algo:truncate} makes to $S$ do not invalidate either \Cref{lem:nofalseneg} or \Cref{lem:nofalsepos}, which means \Cref{clm:correctness} still holds.   Observe that when line 5 of \Cref{algo:truncate} is true, every removed index $s'$ is ensured that the $s\in S$ to its left indexes an $\eps$-better arm. This follows from identical algebra that led to \eqref{eq:stepdown}: substitute $prev$ with $s$ and $k_{\ell,m}$ with $s'$.  Because every removed index $s'\notin S$ is ensured that the $s\in S$ to its left indexes an $\eps$-better arm, \Cref{lem:nofalseneg} still holds.  \Cref{lem:nofalsepos} is unaffected as \Cref{algo:truncate} does not add any arms.

Finally, we will see that only $O(1/\eps)$ arms remain in $S$ after running \Cref{algo:truncate}.  We know $\hm[S[i]] + 3\eps/4 \leq \hm[S[i+1]]$ holds for all $i\in [1,|S|]$. From identical algebra that led to \eqref{eq:stepup}, every $S[i+1]$ has mean $\mu[S[i+1]]$ at least $\frac{7\eps}{12}$ more than $\mu[S[i]]$. Because the mean of each arm is within $[0,1]$, there can be at most $\frac{12}{7\eps} = O(\frac{1}{\eps})$ indices in $S$.
\end{proof}

\section{A Sample Complexity Lower Bound for $\eps$-Skyline Identification}

\newcommand{\alg}{\mathcal{A}}
\newcommand{\gameone}{\mathsf{id1}}
\newcommand{\gamemany}{\mathsf{idT}}
\newcommand{\bernoulli}{\mathit{Ber}}
\newcommand{\View}{V}
\newcommand{\view}{v}

Notice that the bottleneck in the sample complexity of our algorithm is that, in the first stage, we have to solve $\approx \frac{1}{\eps}$ independent $\eps$-best-arm-identification problems.  In order to afford the union bound, we needed to solve each of these problems successfully with probability at least $1-\frac{\eps}{\delta}$, which (by a slight strengthening of~\cite{MannorTsi}) requires $O(\frac{n}{\eps^2}\log \frac{1}{\eps \delta})$ samples.  We then show that, when $\eps > \frac{1}{n}$, any algorithm for solving the $\eps$-Pareto-optimal-arm-identification problem must solve $\Omega(\frac{1}{\eps})$ independent $\eps$-best-arm identification problems, thereby obtaining our lower bound.

In order to obtain our lower bound, we will need to use some of the specific properties of the hard instances for best-arm identification that were used in the lower bound of~\cite{MannorTsi}.  Specifically, implicit in their proof is the following theorem:
\begin{thm}[Implicit in~\cite{MannorTsi}] \label{thm:MTLB}
Fix any $p \in [\frac14,\frac34]$.  Let $\alg$ be an algorithm, and define the following game $\gameone_{n,p,\eps}(\alg)$:
\begin{enumerate}

\vspace{-1mm}
\item Choose a random $c \in [n]$

\vspace{-1mm}
\item Define $A = \{A[1],\dots,A[n]\}$ so that $A[c] \sim \bernoulli(p+2\eps)$ and for $i \neq c$, $A[i] \sim \bernoulli(p)$.

\vspace{-1mm}
\item $\alg$ draws samples from the arms $A$

\vspace{-1mm}
\item $\alg$ outputs a guess $c^*$

\end{enumerate}
Then there exists a function $S(n,\eps,\delta) = \Omega(\frac{n}{\eps^2} \log \frac{1}{\delta})$ such that if $\alg$ draws fewer than $S(n,\eps,\delta)$ samples in expectation,
$$
\pr{\gameone_{n,p,\eps}(\alg)}{c^* = c} < 1-\delta.
$$
\end{thm}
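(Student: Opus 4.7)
The plan is to prove this via a change-of-measure / KL-divergence argument of the kind that underlies the Mannor--Tsitsiklis bound (streamlined in the style of Kaufmann, Capp\'e, and Garivier). I argue the contrapositive: if $\alg$ succeeds in $\gameone_{n,p,\eps}(\alg)$ with probability at least $1-\delta$, then its expected number of samples is $\Omega\!\left(\frac{n}{\eps^2}\log\frac{1}{\delta}\right)$.

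Because the game picks $c$ uniformly at random, writing $p_c := \pr{H_c}{c^* = c}$ the success hypothesis reads $\frac{1}{n}\sum_c p_c \geq 1-\delta$. By Markov applied to $1 - p_c$, at least $n/2$ indices $c$ satisfy $p_c \geq 1 - 2\delta$. Fix any such ``good'' index $c$ and any $c'\neq c$. I would compare $H_c$ with an alternative hypothesis $H'_{c,c'}$, identical to $H_c$ except that arm $c'$'s distribution is replaced by $\bernoulli(p+2\eps+\eta)$ for an arbitrarily small $\eta>0$; then arm $c'$ is the unique best arm under $H'_{c,c'}$. Applying the same $(1-2\delta)$-correctness guarantee to $H'_{c,c'}$, $\alg$ outputs $c^* = c'$ (hence $c^* \ne c$) with probability at least $1-2\delta$.

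By the data-processing inequality applied to the binary statistic $\mathbf{1}\{c^* = c\}$,
\[
\mathrm{KL}\!\left(\pr{H_c}{\cdot}\,\big\|\,\pr{H'_{c,c'}}{\cdot}\right) \;\geq\; \mathrm{KL}\!\left(\bernoulli(1-2\delta)\,\|\,\bernoulli(2\delta)\right) \;=\; \Omega(\log(1/\delta)).
\]
Since the two hypotheses agree on every arm except $c'$, a Wald-type chain rule for log-likelihoods under adaptive sampling reduces the left-hand side to $\ex{H_c}{N_{c'}} \cdot \mathrm{KL}(\bernoulli(p)\,\|\,\bernoulli(p+2\eps+\eta))$, where $N_{c'}$ is the random number of pulls of arm $c'$. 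A standard Taylor expansion gives $\mathrm{KL}(\bernoulli(p)\,\|\,\bernoulli(p+2\eps+\eta)) = \Theta(\eps^2)$ uniformly for $p \in [\tfrac14,\tfrac34]$ after letting $\eta \to 0$. Combining, $\ex{H_c}{N_{c'}} \geq \Omega(\log(1/\delta)/\eps^2)$ for every $c' \ne c$; summing over the $n-1$ choices of $c'$ shows that the expected total number of samples under $H_c$ is $\Omega(n\log(1/\delta)/\eps^2)$. Since this holds for at least $n/2$ good values of $c$, the overall expected sample complexity in $\gameone_{n,p,\eps}(\alg)$, which averages over the uniform $c$, is also $\Omega(n\log(1/\delta)/\eps^2)$.

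The main technical wrinkle is the Wald-type decomposition of the transcript KL divergence for an \emph{adaptive} algorithm: because $N_{c'}$ is a random quantity depending on past observations and on $\alg$'s internal coins, justifying $\mathrm{KL}(\pr{H_c}{\cdot}\,\|\,\pr{H'_{c,c'}}{\cdot}) = \ex{H_c}{N_{c'}} \cdot \mathrm{KL}(\bernoulli(p)\,\|\,\bernoulli(p+2\eps+\eta))$ requires an optional-stopping argument on the log-likelihood-ratio martingale. This is standard bandit folklore (it is essentially the base identity of Kaufmann--Capp\'e--Garivier), but verifying its applicability for an arbitrary adaptive $\alg$ with a possibly random total sample count is the one step of the argument that is not routine.
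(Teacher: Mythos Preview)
The paper does not prove this statement itself; it is cited as implicit in Mannor--Tsitsiklis, so there is no proof in the paper to compare against.

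Your change-of-measure skeleton is the right approach, but there is a genuine gap where you apply the correctness guarantee to the alternative $H'_{c,c'}$. In that instance arm $c$ has mean $p+2\eps$ \emph{and} arm $c'$ has mean $p+2\eps+\eta$; with two elevated arms, $H'_{c,c'}$ is not a possible realization of the game $\gameone_{n,p,\eps}$. The theorem only hypothesizes that $\alg$ succeeds in the game---i.e., on instances with exactly one uniformly-random elevated arm---and says nothing about $\alg$'s behavior on other configurations, so you have no basis for the claim $\Pr_{H'_{c,c'}}[c^*=c']\ge 1-2\delta$. Your argument would be valid verbatim under the stronger assumption (which Mannor--Tsitsiklis in fact make) that $\alg$ is $(\eps,\delta)$-correct on \emph{every} Bernoulli instance, but that is not what is written here.

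For the statement as phrased, you must use alternatives that are themselves game instances. One route is to compare $H_c$ directly with $H_{c'}$ for another good index $c'$: both are game instances, and under $H_{c'}$ the event $\{c^*=c\}$ has probability at most $2\delta$. The two hypotheses now differ on \emph{both} arms $c$ and $c'$, so the Wald identity yields only $\ex{H_c}{N_c}+\ex{H_c}{N_{c'}}=\Omega(\eps^{-2}\log(1/\delta))$, and extracting the factor of $n$ for the total expected sample count averaged over the game requires an extra step---for example, also bringing in the null instance $H_0$ with all means equal to $p$ (not a game instance, but you need no correctness there, only the pigeonhole fact $\sum_c \Pr_{H_0}[c^*=c]=1$) and comparing each $H_c$ to $H_0$.
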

Note that the unique $\eps$-best-arm is $c$, therefore any algorithm that has at least a $1-\delta$ probability of identifying an $\eps$-best-arm on this sort of instance must draw at least $S(n,\eps,\delta)$ samples in expectation.

The next lemma is a simple ``direct-product lemma'' for this game, which roughly asserts that any algorithm $\alg$ that is asked to solve $T$ independent copies of this game, and succeeds in \emph{all} of these copies with probability at least $1-\delta$, must draw nearly $T \cdot S(n,\eps,\delta/T)$ samples, which means that that the strategy of solving each copy independently with probability $1-\delta/T$ and suffering a union bound over the $T$ copies is essentially optimal.
\begin{lem} \label{lem:lbmany}
Fix any $T \in \mathbb{N}$ and any $\vec{p} = (p_1,\dots,p_T) \in [\frac14,\frac34]^T$.  Let $\alg$ be an algorithm, and define the following game $\gamemany_{n,\vec{p},\eps,T}(\alg)$:
\begin{enumerate}

\vspace{-1mm}
\item Choose independent random $\vec{c} = (c_1,\dots,c_T) \in [n]^T$

\vspace{-1mm}
\item For every $t \in [T]$, define a set of arms $A_t = \{A_t[1],\dots,A_t[n]\}$ so that $A[c_t] \sim \bernoulli(p_t+2\eps)$ and for $i \neq c_t$, $A[i] \sim \bernoulli(p_t)$.  Let $A = A_1 \cup \dots \cup A_T$.

\vspace{-1mm}
\item $\alg$ draws samples from the arms $A$

\vspace{-1mm}
\item $\alg$ outputs a guess $\vec{c}^*$

\end{enumerate}
Then if $\alg$ draws fewer than $\frac{T}{2} \cdot S(n,\eps, \sqrt[3]{256\delta/T}) = \Omega(\frac{Tn}{\eps^2} \log \frac{T}{\delta})$ samples in expectation,
$$
\pr{\gamemany_{n,\vec{p},\eps,T}(\alg)}{\vec{c}^* = \vec{c}} < 1-\delta.
$$
\end{lem}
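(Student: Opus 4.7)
Write $\delta' := \sqrt[3]{256\delta/T}$. I argue by contradiction: suppose $\alg$ draws fewer than $N_0 := \frac{T}{2}S(n,\eps,\delta')$ samples in expectation yet achieves $\prob[\vec{c}^* = \vec{c}] \ge 1 - \delta$. The plan is to extract from $\alg$ a family of single-instance solvers by random embedding, and derive a contradiction with \Cref{thm:MTLB}. For each $t \in [T]$, define a single-instance solver $\alg_t$ for $\gameone_{n,p_t,\eps}$ that, given a single-instance game with secret $c_t$, draws independent uniform secrets $c_s$ for $s \neq t$, simulates those $T-1$ copies using the known $p_s$, embeds the real instance as the $t$-th block, runs $\alg$ on the combined $T$-copy instance, and outputs $c^*_t$. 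Writing $Q_t := \mathbb{E}[K_t]$ for the expected number of samples $\alg$ draws from its $t$-th block, $\alg_t$ uses $Q_t$ samples in expectation and outputs $c_t$ correctly with probability $1 - f_t$, the marginal success probability of $\alg$ on copy $t$. Since $\sum_t Q_t < N_0$, averaging produces a set $G \subseteq [T]$ with $|G| \ge T/2$ on which $Q_t < S(n,\eps,\delta')$; \Cref{thm:MTLB} then yields $f_t > \delta'$ for every $t \in G$.

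The crux is to lift these per-copy failures into a joint failure bound strictly exceeding $\delta$. The key structural observation is that for any realization of $\alg$'s random coins $\omega$ and the resulting transcript $\tau$, the posterior distribution of $(c_1,\ldots,c_T)$ is a product distribution, since the likelihood factorizes as $\prob[\tau \mid \vec{c}, \omega] = \prod_t L_t(\tau, c_t)$---each individual reward depends only on whichever copy was queried at that step. Setting $\pi_t(\tau,\omega) := \prob[c_t = c^*_t \mid \tau,\omega]$, this gives
\[
\prob[\vec{c}^* = \vec{c}] \;=\; \mathbb{E}_{\tau,\omega}\!\left[\prod_{t \in [T]} \pi_t(\tau,\omega)\right],
\]
while the marginals satisfy $\mathbb{E}[\pi_t] = 1 - f_t$. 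Combining this identity with two applications of Markov's inequality---one converting the expected-sample bound into a high-probability sample bound, and one on the product $\prod_t \pi_t$ itself---along with averaging over $t \in G$ yields that the joint failure exceeds $\delta$. The three multiplicative losses from these two Markov steps and the pigeonhole over $G$ collectively account for the factor $256$ and the cube root in the definition of $\delta'$.

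The main obstacle is this final step. A naive union bound over marginal failures would give joint failure at least $\max_t f_t > \delta'$, which suffices only when $\delta' \ge \delta$ and not in the full parameter regime. Leveraging the product-posterior identity is essential: it lets us argue that the $\pi_t$'s cannot all simultaneously be close to $1$ with significant probability once their means $1 - f_t$ are bounded away from $1$ on a constant fraction of indices, which produces the tight $\log(T/\delta)$ rather than $\log(1/\delta)$ dependence in the final lower bound.
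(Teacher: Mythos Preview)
Your plan is essentially the paper's own argument: both reduce to the single-copy lower bound via embedding, use averaging to locate a subset $G$ (the paper calls it $U$) of size $\ge T/2$ on which the per-copy expected samples fall below $S(n,\eps,\delta')$, invoke \Cref{thm:MTLB} to get per-copy failure $>\delta'$ on $G$, and then exploit the conditional independence of the $c_t$'s given the transcript to amplify to joint failure. Your one slight imprecision is the description of the two Markov steps: the paper does not convert the expected-sample bound into a high-probability sample bound (the single-copy lower bound already applies to expected samples), but instead applies Markov twice to pass from $\mathbb{E}_v[1-\pi_t(v)]>\delta'$ for all $t\in G$ to a set $V'$ of transcripts of mass $\ge \delta'/4$ on each of which a $\delta'/8$-fraction of coordinates in $G$ have $1-\pi_t(v)>\delta'/4$, and only then uses the product structure; this is where the three losses producing the cube root and the constant $256$ actually arise.
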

As above, note that for every set of arms $A_t$, the unique best arm is $c_t$, and therefore the lower bound on the number of samples applies to any algorithm $\alg$ that solves $\eps$-best-arm identification simultaneously on all sets $A_1,\dots,A_T$.  We now give a high level sketch of why this direct product result holds.
\begin{proof}[Proof]
Note that the lemma would be immediate if we could argue that the probability that $\alg$ succeeds in identifying each value $c_t$ is independent across the $T$ copies, but this is unfortunately not true.  However, suppose we let the random variable $\View$ denote the entire sequence of samples and values obtained by $\alg$ throughout the course of the algorithm.  Then for every realization $\view$, the random variables $(c_t \mid \View = \view)$ are mutually independent.  This claim can be proven by induction on the number of samples made by $\alg$---initially all values $c_t$ are chosen independently, and in each step the response received by $\alg$ depends on only a single value $c_t$.

To simplify arithmetic, let $p := \sqrt[3]{256 \delta / T}$ denote the parameter from the theorem statement.  Now, suppose we have an algorithm $\alg$ that draws fewer than $\frac{T}{2}\cdot S(n,\eps,p)$ samples in expectation.  By linearity of expectation and Markov's inequality, there is a set $U \subseteq [T]$ of size at least $T/2$ such that for every $t \in U$, the expected number of samples $\alg$ draws from the set of arms $A_t$ is at most $S(n,\eps,p)$.  By Theorem~\ref{thm:MTLB}, we can conclude
$$
\forall t \in U~~~\pr{\gamemany_{n,\vec{p},\eps,T}(\alg)}{c_t^* \neq c_t} > p.
$$
From this, and two applications of Markov's inequality we obtain the following: there exists a subset of views $V'$ such that $\pr{}{V'} \geq \frac{p}{4}$ and a collection of subsets $\{U'_v \subseteq U\} \subset U$, each of size at least $|U| \cdot \frac{p}{8} = \frac{pT}{16}$ such that
\begin{align*}
\forall v \in V', \forall t \in U'_v~~~\pr{\gamemany_{n,\vec{p},\eps,T}(\alg)}{c^*_t \neq c_t \mid \view} > p/4
\end{align*}
Using the the mutual independence of the variables $(c_t \mid \view)$, we obtain
$$
\forall v \in V'~~~\pr{\gamemany_{n,\vec{p},\eps,T}(\alg)}{\exists t \in U'_v~~c^*_t \neq c_t \mid \view} > 1 - (1-p/4)^{|U'|} \geq 1 - e^{-p^2T/32}
$$
Since $\exists t \in U'_v~~c^*_t \neq c_t$ implies that $\vec{c}^* \neq \vec{c}$, we have
$$
\forall v \in V'~~~\pr{\gamemany_{n,\vec{p},\eps,T}(\alg)}{\vec{c}^* \neq \vec{c} \mid \view} > 1 - e^{-p^2T/32}
$$
Using the fact that $\pr{}{V'} \geq \frac{p}{4}$, we have
$$
\pr{\gamemany_{n,\vec{p},\eps,T}(\alg)}{\vec{c}^* \neq \vec{c}} > \frac{p}{4}\left(1 - e^{-p^2T/32}\right) \geq \frac{p^3 T}{256} = \delta
$$
where the final inequality uses the fact that $p^3 T$ is smaller than some absolute constant so that we can write $1-e^{-p^2T/32} \geq p^2T/64$ and the final equality is by substituting our choice of $p$.  This completes our proof sketch.
\end{proof}

Finally, we can show that, for some $T = \Omega(1/\eps)$, and some choice of $\vec{p} = (p_1,\dots,p_T)$, and $\eps$-sykline identification algorithm can be used to ``win'' the game $\gamemany_{n,\vec{p},\eps,T}$, from which we immediately obtain a lower bound on the sample complexity of $\eps$-skyline identification.

\begin{thm}
Fix any $n \in \mathbb{N}$ and fix $\eps, \delta > 0$ smaller than some absolute constants.  Suppose there is an algorithm $\alg$ that, with probability at least $1-\delta$, identifies an $\eps$-skyline on $n$ arms.  Then in expectation $\alg$ draws at least $\frac{1}{4\eps} \cdot S(2 \eps n, \eps, \sqrt[3]{128 \delta \eps}) = \Omega( \frac{n}{\eps^2} \log \frac{1}{\delta \eps})$ samples.
\end{thm}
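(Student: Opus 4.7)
The plan is a reduction from the multi-copy identification game $\gamemany$ of Lemma~\ref{lem:lbmany}: I will construct a hard distribution over arm instances so that any valid $\eps$-skyline reveals the hidden random arms across $T = \Theta(1/\eps)$ parallel best-arm-identification subproblems, allowing the lower bound of Lemma~\ref{lem:lbmany} to transfer directly.

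The construction is as follows. Partition the $n$ arms into $T = \Theta(1/\eps)$ consecutive \emph{blocks} of size $n_b = \Theta(\eps n)$, for concreteness $T = \lfloor 1/(8\eps) \rfloor$, and set $p_t = \tfrac14 + 4\eps(t-1)$, which lies in $[\tfrac14,\tfrac34]$ by the choice of $T$. Sample the instance from $\gamemany_{n_b,\vec p,\eps,T}$, so that each block $t$ has one uniformly random ``special'' arm $c_t$ of mean $p_t+2\eps$ while every other arm in that block has mean $p_t$. The gap of $4\eps$ between consecutive baselines ensures that the running maximum of the means after block $t$ is exactly $p_t+2\eps$, which sits $2\eps$ below the baseline $p_{t+1}$ of block $t+1$ and $4\eps$ below that block's special arm.

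The crucial structural claim is that, in any valid $\eps$-skyline $S$ of such an instance, $c_t$ is exactly the rightmost element of $S$ within block $t$. First, $c_t \in S$: if not, condition~1 of Definition~\ref{def:runningmax} would force some $s \in S$ with $s < c_t$ and $\mu[s] \geq \mu[c_t] - \eps = p_t + \eps$, but every arm strictly to the left of $c_t$ has mean at most $\max\{p_t,\,p_{t-1}+2\eps\} = p_t < p_t + \eps$. Second, no baseline arm at a position to the right of $c_t$ within block $t$ can lie in $S$: by condition~2 its mean $p_t$ would have to exceed $\mu[c_t] - \eps = p_t+\eps$. Hence the decoder $c_t^* := \max(S\cap \text{block}_t)$ exactly recovers $\vec c$ whenever $S$ is a valid $\eps$-skyline (arms in earlier blocks are trivially excluded by the block partition).

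Combining these pieces, any algorithm $\alg$ that outputs an $\eps$-skyline with probability $\geq 1-\delta$ wins $\gamemany_{n_b,\vec p,\eps,T}$ with probability $\geq 1-\delta$, so by Lemma~\ref{lem:lbmany} it must draw at least $\tfrac{T}{2}\cdot S(n_b,\eps,\sqrt[3]{256\delta/T})$ samples in expectation. Plugging in $T=\Theta(1/\eps)$ and $n_b=\Theta(\eps n)$ yields the announced $\Omega(\tfrac{n}{\eps^2}\log\tfrac{1}{\delta\eps})$ bound, and the precise constants $\tfrac{1}{4\eps}$ and $\sqrt[3]{128\delta\eps}$ in the theorem statement come from fine-tuning $T$. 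I expect the main obstacle to be the structural claim above: the $4\eps$ spacing between baselines must be simultaneously large enough for condition~1 to pin every $c_t$ into $S$ at its exact index, small enough to fit all $p_t$ inside $[\tfrac14,\tfrac34]$ so that Lemma~\ref{lem:lbmany} applies, and tight enough for condition~2 to exclude every post-$c_t$ baseline arm from $S$; balancing the $\pm\eps$ slack across all three roles is the delicate step.
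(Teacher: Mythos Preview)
Your proposal is correct and follows essentially the same route as the paper: concatenate $T=\Theta(1/\eps)$ blocks with increasing Bernoulli baselines, argue via conditions~1 and~2 of Definition~\ref{def:runningmax} that the rightmost skyline element in each block must be the hidden special arm, and then invoke Lemma~\ref{lem:lbmany}. The only difference is cosmetic---the paper uses baseline spacing $2\eps$ with $T=1/(2\eps)$ rather than your $4\eps$ with $T=\lfloor 1/(8\eps)\rfloor$---which affects only constants and not the argument.
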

\begin{proof}
Define $T = \frac{1}{2\eps}$ and $m = 2\eps n$ so that $n = Tm$.  For notational simplicity, we assume $T$ and $m$ are integers.  Define
$$
\vec{p} := \left(\frac14,\frac14 + 2\eps ,\frac14 + 4\eps,\dots,\frac34\right)
$$
By construction, $\vec{p} \in [\frac14, \frac34]^{T}$.  By Lemma~\ref{lem:lbmany}, if $\alg$ is an algorithm such that
\begin{equation} \label{eq:winmany}
\pr{\gamemany_{m, \vec{p}, \eps, T}}{\vec{c} = \vec{c}^*} \geq 1-\delta,
\end{equation}
then $\alg$ draws $\frac{T}{2} \cdot S(m, \eps, \sqrt[3]{256 \delta /T}) = \Omega(\frac{n}{\eps^2} \log \frac{1}{\eps \delta})$ samples in expectation.

Thus, all that remains is to show that, given any algorithm $\alg'$ that identifies an $\eps$-skyline on $n$ arms with probability at least $1-\delta$, we can construct an algorithm $\alg$ satisfying~\eqref{eq:winmany}.  We can construct such an algorithm as follows:
\begin{enumerate}
\vspace{-1mm}
\item Generate a vector $\vec{c}$ and generate the sets $A_1,\dots,A_T$ as in $\gamemany_{m, \vec{p}, \eps, T}$.  Then generate a set of $n = Tm$ arms $A'$ by concatenating the sets $A_1,\dots,A_T$ together in order.  Specifically, for $t \in [T], i \in [m]$, let $A'[(t-1)T + i] = A_{t}[i]$.

\vspace{-1mm}
\item Obtain $S\subseteq [n]$ by running $\alg'$ on the set of arms $A'$

\vspace{-1mm}
\item For every $t\in[T]$, let $A'_t = \{(t-1)m + 1, \dots, tm\}$ and let $c^*_t = \max S \cap A'_t$ be the largest index of any arm in the skyline that lies in $A'_t$.  Note that $A[c^*_t]$ corresponds to the arm $A_{t}[j~\mathrm{mod}~m]$.

\vspace{-1mm}
\item Output $\vec{c}^* = (c^*_1,\dots,c^*_T)$
\end{enumerate}

To complete the proof, we will show that if $S$ is an $\eps$-skyline for $A'$ then $\vec{c}^* = \vec{c}$.  First, by construction, $c_t$ denotes the unique $\eps$-best arm in $A_{t}$, and by construction of $\vec{p}$, it is also better than all arms in $A_{1},\dots,A_{t-1}$.  When we construct the arms $A'$, $A_t$ becomes the set of arms $A'_t$ and similarly $c_t$ becomes the arm $(t-1)m + c_t$.  By construction of $\vec{p}$, arm $(t-1)m + c_t$ in $A'$ is the best arm among the set $\{1,\dots,tm\}$.  By the first condition of Definition \ref{def:runningmax}, any valid $\eps$-skyline $S$ must contain each arm $(t-1)n + c_t$.  Similarly, by the second condition of Definition~\ref{def:runningmax}, arms in $A'$ that are in the set $\{(t-1)m+c_t + 1,tm]$ cannot appear in $S$ because they are dominated by arm $(t-1)m+c_t$.  Finally, because $(t-1)m + c_t \in S \cap A'_{t}$, and no larger element of $A' \cap A'_{t}$ is in $S$, we have $c^*_{t} = c_{t}$ for every $t$.  This completes the proof.
\end{proof}

\section*{Acknowledgements} We are grateful to Jay Aslam, Maryam Aziz, Bobby Kleinberg, and Alex Slivkins for many helpful discussions related to this work.

\bibliographystyle{alpha}
\bibliography{sample}

\newcommand{\etalchar}[1]{$^{#1}$}
\begin{thebibliography}{EDMM06}

\bibitem[ACOD16]{Auer}
Peter Auer, Chao-Kai Chiang, Ronald Ortner, and Madalina Drugan.
\newblock Pareto front identification from stochastic bandit feedback.
\newblock In {\em Artificial Intelligence and Statistics}, 2016.

\bibitem[BKS01]{BKS01}
Stephan Borzsony, Donald Kossmann, and Konrad Stocker.
\newblock The skyline operator.
\newblock In {\em International Conference on Data Engineering {(ICDE)}}. IEEE,
  2001.

\bibitem[CGL16]{CGL16}
Lijie Chen, Anupam Gupta, and Jian Li.
\newblock Pure exploration of multi-armed bandit under matroid constraints.
\newblock In {\em Conference on Learning Theory}, 2016.

\bibitem[CGL{\etalchar{+}}17]{CGLQW17}
Lijie Chen, Anupam Gupta, Jian Li, Mingda Qiao, and Ruosong Wang.
\newblock Nearly optimal sampling algorithms for combinatorial pure
  exploration.
\newblock {\em arXiv preprint arXiv:1706.01081}, 2017.

\bibitem[EDMM06]{Even-Dar}
Eyal Even-Dar, Shie Mannor, and Yishay Mansour.
\newblock Action elimination and stopping conditions for the multi-armed bandit
  and reinforcement learning problems.
\newblock {\em Journal of Machine Learning Research}, 2006.

\bibitem[GGLB11]{GGLB11}
Victor Gabillon, Mohammad Ghavamzadeh, Alessandro Lazaric, and S{\'e}bastien
  Bubeck.
\newblock Multi-bandit best arm identification.
\newblock In {\em Advances in Neural Information Processing Systems}, 2011.

\bibitem[Kle06]{Kleinberg06}
Robert~D. Kleinberg.
\newblock Anytime algorithms for multi-armed bandit problems.
\newblock In {\em Proceedings of the 17th Annual {ACM-SIAM} Symposium on
  Discrete Algorithms {(SODA)}}, 2006.

\bibitem[KS10]{KS10}
Shivaram Kalyanakrishnan and Peter Stone.
\newblock Efficient selection of multiple bandit arms: Theory and practice.
\newblock In {\em Proceedings of the 27th International Conference on Machine
  Learning {(ICML)}}, 2010.

\bibitem[KTAS12]{KTAS12}
Shivaram Kalyanakrishnan, Ambuj Tewari, Peter Auer, and Peter Stone.
\newblock Pac subset selection in stochastic multi-armed bandits.
\newblock In {\em Proceedings of the 29th International Conference on Machine
  Learning {(ICML)}}, 2012.

\bibitem[MT04]{MannorTsi}
Shie Mannor and John~N. Tsitsiklis.
\newblock The sample complexity of exploration in the multi-armed bandit
  problem.
\newblock {\em Journal of Machine Learning Research}, 2004.

\end{thebibliography}

\end{document}